
\documentclass[Afour,sageh,times]{sagej}

\usepackage[colorlinks,bookmarksopen,bookmarksnumbered,citecolor=red,urlcolor=red]{hyperref}
\usepackage{utils}
\usepackage{mathdots}
\usepackage{subcaption}

\graphicspath{{figures/}}
\setenumerate{leftmargin=*}
\setitemize{leftmargin=*}

\captionsetup[figure]{font=small,labelfont={bf},textfont={normalfont},labelsep=sageperiod,justification=justified,singlelinecheck=false}
\setcounter{secnumdepth}{3}

\begin{document}

\runninghead{Richards, Azizan, Slotine, and Pavone}

\title{Control-oriented meta-learning}

\author{%
    Spencer M.~Richards\affilnum{1},
    Navid Azizan\affilnum{2},
    Jean-Jacques Slotine\affilnum{2}, and
    Marco Pavone\affilnum{1}
}

\affiliation{%
    \affilnum{1}Department of Aeronautics \& Astronautics, Stanford University, Stanford, CA, U.S.A.\\
    \affilnum{2}Department of Mechanical Engineering, Massachusetts Institute of Technology, Cambridge, MA, U.S.A.
}
\corrauth{Spencer M.~Richards, Department of Aeronautics \& Astronautics, Stanford University, 496 Lomita Mall, Stanford, CA 94305, U.S.A.}
\email{spenrich@stanford.edu}

\begin{abstract}
    Real-time adaptation is imperative to the control of robots operating in complex, dynamic environments. Adaptive control laws can endow even nonlinear systems with good trajectory tracking performance, provided that any uncertain dynamics terms are linearly parameterizable with known nonlinear features. However, it is often difficult to specify such features a priori, such as for aerodynamic disturbances on rotorcraft or interaction forces between a manipulator arm and various objects. In this paper, we turn to data-driven modeling with neural networks to learn, offline from past data, an adaptive controller with an internal parametric model of these nonlinear features. Our key insight is that we can better prepare the controller for deployment with control-oriented meta-learning of features in closed-loop simulation, rather than regression-oriented meta-learning of features to fit input-output data. Specifically, we meta-learn the adaptive controller with closed-loop tracking simulation as the base-learner and the average tracking error as the meta-objective. With both fully-actuated and underactuated nonlinear planar rotorcraft subject to wind, we demonstrate that our adaptive controller outperforms other controllers trained with regression-oriented meta-learning when deployed in closed-loop for trajectory tracking control.
\end{abstract}

\keywords{meta-learning, adaptive control, nonlinear systems}

\maketitle

\section{Introduction}\label{sec:introduction}

\begin{figure}[t]
    \centering
    \includegraphics[width=\columnwidth]{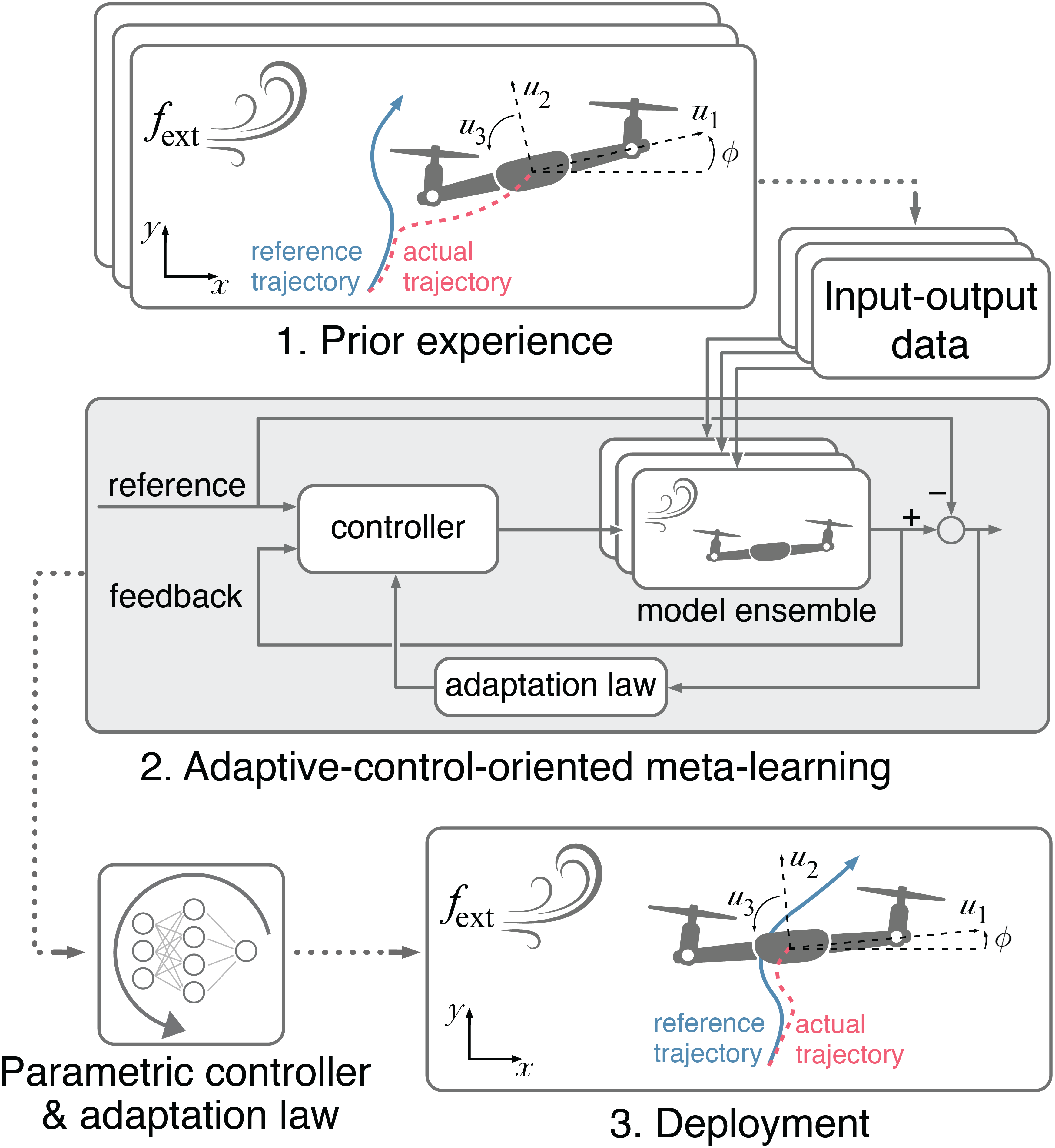}
    \caption{%
        While roboticists can often derive a model for how control inputs affect the system state, it is much more difficult to model prevalent external forces (e.g., from aerodynamics, contact, and friction) that adversely affect tracking performance. In this work, we present a method to meta-learn an adaptive controller offline from previously collected data. Our meta-learning is \emph{control-oriented} rather than regression-oriented; specifically, we: 1) collect input-output data on the true system, 2) train a parametric adaptive controller in closed-loop simulation to adapt well to each model of an ensemble constructed from past input-output data, and 3) test our adaptive controller on the real system. Our method contextualizes training within the downstream control objective, thereby engendering good tracking results at test time, which we demonstrate on fully-actuated and underactuated versions of a planar quadrotor system subject to wind.
    }\label{fig:pfar}
\end{figure}

Performant control in robotics is impeded by the complexity of the dynamical system consisting of the robot itself (i.e., its nonlinear equations of motion) and the interactions with its environment. Roboticists can often derive a physics-based robot model, and then choose from a suite of nonlinear control laws that each offer desirable control-theoretic properties (e.g., good tracking performance) in known, simple environments. Even in the face of model uncertainty, nonlinear control can still yield such properties with the help of \emph{real-time adaptation} to \emph{online} measurements, provided the uncertainty enters the system in a known, structured manner.

However, when a robot is deployed in complex scenarios, it is generally intractable to know even the structure of all possible configurations and interactions that the robot may experience. To address this, system identification and data-driven control seek to learn an accurate input-output model from past measurements. Recent years have also seen a dramatic proliferation of research in machine learning for control by leveraging powerful approximation architectures to predict and optimize the behaviour of dynamical systems. In general, such rich models require extensive data and computation to back-propagate gradients for many layers of parameters, and thus usually cannot be used in fast nonlinear control loops.

Moreover, machine learning of dynamical system models often prioritizes fitting input-output data, i.e., it is \emph{regression-oriented}, with the rationale that designing a controller for a highly accurate model engenders better closed-loop performance on the real system. However, decades of work in system identification and adaptive control recognize that, since a model is often learned for the purpose of control, the learning process itself should be tailored to the \emph{downstream control objective}. This concept of \emph{control-oriented} learning is exemplified by fundamental results in adaptive control theory; guarantees on tracking convergence can be attained \emph{without} convergence of the parameter estimates to those of the true system. 

\subsection{Contributions} 
In this work, we acknowledge this distinction between regression-oriented and control-oriented learning, and propose a control-oriented method to learn a parametric adaptive controller that performs well in closed-loop at test time. Critically, our method (outlined in \cref{fig:pfar}) focuses on \emph{offline} learning from past trajectory data. We formalize training the adaptive controller as a semi-supervised, bi-level meta-learning problem, with the average integrated tracking error across chosen target trajectories as the meta-objective. We use a closed-loop simulation with our adaptive controller as a base-learner, which we then back-propagate gradients through. We discuss how our formulation can be applied to adaptive controllers for general dynamical systems, then specialize it to different classes of nonlinear systems. Through our experiments, we show that by injecting the downstream control objective into offline meta-learning of an adaptive controller, we improve closed-loop trajectory tracking performance at test time in the presence of widely varying disturbances. We provide code to reproduce our results at \url{https://github.com/StanfordASL/Adaptive-Control-Oriented-Meta-Learning}.

A preliminary version of this article was presented at Robotics: Science and Systems 2021 \citep{RichardsAzizanEtAl2021}. In this revised and extended version, we additionally contribute: 1) exposition on how our meta-learning framework can be applied to control-affine and underactuated nonlinear systems, 2) analysis of the relationship between stability certificate functions and online adaptation laws for stable concurrent learning and control, 3) new experiments on an underactuated aerial vehicle system that reinforce our previous observations for fully-actuated systems, and 4) additional experiments with unknown time-varying disturbances.

\section{Related Work}

In this section, we review three key areas of work related to this paper: control-oriented system identification, adaptive control, and meta-learning.

\subsection{Control-Oriented System Identification}
Learning a system model for the express purpose of closed-loop control has been a hallmark of linear system identification since at least the early 1970s \citep{AstromWittenmark1971}. Due to the sheer amount of literature in this area, we direct readers to the book by \citet{Ljung1999} and the survey by \citet{Gevers2005}. Some salient works are the demonstrations by \citet{Skelton1989} on how large open-loop modelling errors do not necessarily cause poor closed-loop prediction, and the theory and practice from \citet{HjalmarssonGeversEtAl1996} and \citet{ForssellLjung2000} for iterative online closed-loop experiments that encourage convergence to a model with optimal closed-loop behaviour. In this paper, we focus on \emph{offline meta-learning} targeting a downstream closed-loop control objective, to train adaptive controllers for \emph{nonlinear} systems.

In nonlinear system identification, there is an emerging body of literature on data-driven, constrained learning for dynamical systems that encourages learned models and controllers to perform well in closed-loop. \citet{Khansari-ZadehBillard2011} and \citet{MedinaBillard2017} train controllers to imitate known invertible dynamical systems while constraining the closed-loop system to be stable. \citet{ChangRoohiEtAl2019} and \citet{SunJhaEtAl2020} jointly learn a controller and a stability certificate for known dynamics to encourage good performance in the resulting closed-loop system. \citet{SinghRichardsEtAl2020} jointly learn a dynamics model and a stabilizability certificate that regularizes the model to perform well in closed-loop, even with a controller designed a posteriori. Overall, these works concern learning a fixed model-controller pair. Instead, with offline meta-learning, we train an \emph{adaptive} controller that can update its internal representation of the dynamics online. We discuss future work explicitly incorporating stability constraints in~\cref{sec:conclusion}.

\subsection{Adaptive Control}
Broadly speaking, adaptive control concerns parametric controllers paired with an \emph{adaptation law} that dictates how the parameters are adjusted online in response to signals in a dynamical system \citep{SlotineLi1991,NarendraAnnaswamy2005,LandauLozanoEtAl2011,IoannouSun2012}. Since at least the 1950s, researchers in adaptive control have focused on parameter adaptation prioritizing control performance over parameter identification \citep{AseltineManciniEtAl1958}. Indeed, one of the oldest adaptation laws, the so-called MIT rule, is essentially gradient descent on the integrated squared \emph{tracking} error \citep{MareelsAndersonEtAl1987}. Tracking convergence to a reference signal is the primary result in Lyapunov stability analyses of adaptive control designs \citep{NarendraValavani1978,NarendraValavani1980}, with parameter convergence as a secondary result if the reference is persistently exciting \citep{AndersonJohnson1982,BoydSastry1986}. In the absence of persistent excitation, \citet{BoffiSlotine2021} show certain adaptive controllers also ``implicitly regularize'' \citep{AzizanHassibi2019,AzizanLaleEtAl2021} the learned parameters to have small Euclidean norm; moreover, different forms of implicit regularization (\eg sparsity-promoting) can be achieved by certain modifications of the adaptation law. Overall, adaptive control prioritizes control performance while learning parameters on a ``need-to-know'' basis, which is a principle that can be extended to many learning-based control contexts \citep{WensingSlotine2020}.

Stable adaptive control of nonlinear systems often relies on linearly parameterizable dynamics with known nonlinear basis functions, \ie \emph{features}, and the ability to cancel these nonlinearities stably with the control input when the parameters are known exactly \citep{SlotineLi1987,SlotineLi1989,SlotineLi1991,LopezSlotine2021}. When such features cannot be derived a priori, function approximators such as neural networks \citep{SannerSlotine1992,JoshiChowdhary2019,JoshiVirdiEtAl2020}, Gaussian processes \citep{GrandeChowdharyEtAl2013,GahlawatZhaoEtAl2020}, and random Fourier features \citep{BoffiTuEtAl2021} can be used and updated online in the adaptive control loop. However, \emph{fast} closed-loop adaptive control with complex function approximators is hindered by the computational effort required to train them; this issue is exacerbated by the practical need for controller gain tuning. In our paper, we focus on offline meta-training of neural network features and controller gains from collected data, with controller structures that can operate in fast closed-loops.

\subsection{Meta-Learning}\label{sec:literature-meta-learning}
Meta-learning is the tool we use to inject the downstream adaptive control application into offline learning from data. Informally, meta-learning or ``learning to learn'' improves knowledge of how to best optimize a given meta-objective across \emph{different} tasks. In the literature, meta-learning has been formalized in various manners; we refer readers to \citet{HospedalesAntoniouEtAl2021} for a survey of them. In general, the algorithm chosen to solve a specific task is the \emph{base-learner}, while the algorithm used to optimize the meta-objective is the \emph{meta-learner}. In our work, when trying to make a dynamical system track several target trajectories, each trajectory is associated with a ``task'', the adaptive tracking controller is the base-learner, and the average tracking error across all of these trajectories is the meta-objective we want to minimize.

Many works try to meta-learn a dynamics model offline that can best fit new input-output data gathered during a particular task. That is, the base- and meta-learners are \emph{regression-oriented}. \citet{BertinettoHenriquesEtAl2019} and \citet{LeeMajiEtAl2019} back-propagate through closed-form ridge regression solutions for few-shot learning, with a maximum likelihood meta-objective. \citet{OConnellShiEtAl2021} apply this same method to learn neural network features for nonlinear mechanical systems. \citet{HarrisonSharmaEtAl2018,HarrisonSharmaEtAl2018b} more generally back-propagate through a Bayesian regression solution to train a Bayesian prior dynamics model with nonlinear features. \citet{NagabandiClaveraEtAl2019} use a maximum likelihood meta-objective, and gradient descent on a multi-step likelihood objective as the base-learner. \citet{BelkhaleLiEtAl2021} also use a maximum likelihood meta-objective, albeit with the base-learner as a maximization of the Evidence Lower BOund (ELBO) over parameterized, task-specific variational posteriors; at test time, they perform latent variable inference online in a slow control loop.

\citet{FinnAbbeelEtAl2017,RajeswaranGhotraEtAl2017}, and \citet{ClaveraRothfussEtAl2018} meta-train a policy with the expected accumulated reward as the meta-objective, and a policy gradient step as the base-learner. These works are similar to ours in that they infuse offline learning with a more control-oriented flavour. However, while policy gradient methods are amenable to purely data-driven models, they beget slow control-loops due to the sampling and gradients required for each update. Instead, we back-propagate gradients through \emph{offline} closed-loop simulations to train adaptive controllers designed for fast online implementation. This yields a meta-trained adaptive controller that enjoys the performance of principled design inspired by the rich body of control-theoretical literature.

\section{Problem Statement}\label{sec:problem-statement}

In this paper, we are interested in controlling the continuous-time, nonlinear dynamical system
\begin{equation}\label{eq:nonlinear-system}
    \dot{x} = f(x, u, w),
\end{equation}
where $x(t) \in \R^n$ is the state, $u(t) \in \R^m$ is the control input, and $w(t) \in \R^d$ is some unknown disturbance, each at time $t \in \R_{\geq 0}$. Specifically, for a given target trajectory $x^*(t) \in \R^n$, we want to choose~$u(t)$ such that~$x(t)$ converges to~$x^*(t)$; we then say $u(t)$ makes the system~\cref{eq:nonlinear-system} track $x^*(t)$. 

Since $w(t)$ is unknown and possibly time-varying, we want to design a feedback law $u = \pi(x,x^*,\hat{a})$ with parameters $\hat{a}(t)$ that are updated \emph{online} according to a chosen \emph{adaptation law} $\dot{\hat{a}} = \rho(x,x^*,\hat{a})$. We refer to $(\pi,\rho)$ together as an \emph{adaptive controller}. For example, consider the control-affine system
\begin{equation}\label{eq:matched-uncertainty}
    \dot{x} = f(x) + B(x)(u + Y(x)a),
\end{equation}
where $f$, $B$, and $Y$ are known, possibly nonlinear maps, and~$a$ is a vector of \emph{unknown} parameters. We can interpret ${ w(t) = Y(x(t))a }$ as the disturbance in this system. A reasonable feedback law choice would be
\begin{equation}\label{eq:adaptive-controller}
    u = \bar{\pi}(x,x^*) - Y(x)\hat{a},
\end{equation}
where $\bar{\pi}$ ensures ${ \dot{x} = f(x) + B(x)\bar{\pi}(x,x^*) }$ tracks $x^*(t)$, and the term $Y(x)\hat{a}$ is meant to cancel $Y(x)a$ in~\cref{eq:matched-uncertainty}. For this reason, $Y(x)a$ is termed a \emph{matched uncertainty} in the literature. If the adaptation law $\dot{\hat{a}} = \rho(x,x^*,\hat{a})$ is designed such that~$Y(x(t))\hat{a}(t)$ converges to~$Y(x(t))a(t)$, then we can use~\cref{eq:adaptive-controller} to make~\cref{eq:matched-uncertainty} track~$x^*(t)$. Critically, this is \emph{not} the same as requiring~$\hat{a}(t)$ to converge to~$a(t)$. Since~$Y(x)a$ depends on~$x(t)$ and hence indirectly on the target~$x^*(t)$, the roles of feedback and adaptation are inextricably linked by the tracking control objective. Overall, learning in adaptive control is done on a ``need-to-know'' basis to cancel~$Y(x)a$ in \emph{closed-loop}, rather than to estimate~$a$ in open-loop.

\section{Bi-Level Meta-Learning}\label{sec:meta-learning}

We now describe some preliminaries on meta-learning akin to \citet{FinnAbbeelEtAl2017} and \citet{RajeswaranFinnEtAl2019}, so that we can apply these ideas in the next section to the adaptive control problem~\cref{eq:nonlinear-system} and in \cref{sec:baselines} to our baselines.

In machine learning, we typically seek some optimal parameters ${\xi^* \in \argmin_\xi \ell(\xi,\mathcal{D})}$, where~$\ell$ is a scalar-valued loss function and~$\mathcal{D}$ is some data set. In meta-learning, we instead have a collection of loss functions~$\{\ell_i\}_{i=1}^M$, training data sets~$\{\mathcal{D}^{\mathrm{train}}_i\}_{i=1}^M$, and evaluation data sets $\{\mathcal{D}^{\mathrm{eval}}_i\}_{i=1}^M$, where each~$i$ corresponds to a task. Moreover, during each task~$i$, we can apply an adaptation mechanism ${\operatorname{\mathrm{Adapt}} : (\theta, \mathcal{D}^{\mathrm{train}}_i) \mapsto \xi_i}$ to map so-called meta-parameters~$\theta$ and the task-specific training data~$\mathcal{D}^{\mathrm{train}}_i$ to task-specific parameters~$\xi_i$. The crux of meta-learning is to solve the bi-level problem
\begin{equation}\label{eq:meta-learning}
\begin{aligned}
    \theta^* \in\, &\argmin_\theta \frac{1}{M}\rbr*{
        \sum_{i=1}^M \ell_i(\xi_i, \mathcal{D}_i^{\mathrm{eval}})
        + \mu_\mathrm{meta}\norm{\theta}_2^2   
    }
    \\&\ \mathrm{s.t.}\enspace
    \xi_i = \operatorname{\mathrm{Adapt}}(\theta,\mathcal{D}^{\mathrm{train}}_i)
\end{aligned},
\end{equation}
with regularization coefficient $\mu_\mathrm{meta} \geq 0$, thereby producing meta-parameters~$\theta^*$ that are on average well-suited to being adapted for each task. This motivates the moniker ``learning to learn'' for meta-learning. The optimization~\cref{eq:meta-learning} is the meta-problem, while the average loss is the meta-loss. The adaptation mechanism $\mathrm{Adapt}$ is termed the \emph{base-learner}, while the algorithm used to solve~\cref{eq:meta-learning} is termed the \emph{meta-learner} \citep{HospedalesAntoniouEtAl2021}.

Generally, the meta-learner is chosen to be some gradient descent algorithm. Choosing a good base-learner is an open problem in meta-learning research. \citet{FinnAbbeelEtAl2017} propose using a gradient descent step as the base-learner, such that $\xi_i = \theta - \alpha\grad_\theta\ell_i(\theta,\mathcal{D}_i^\mathrm{train})$ in~\cref{eq:meta-learning} with some learning rate~${\alpha > 0}$. This approach is general in that it can be applied to any differentiable task loss functions. \citet{BertinettoHenriquesEtAl2019} and \citet{LeeMajiEtAl2019} instead study when the base-learner can be expressed as a convex program with a differentiable closed-form solution. In particular, they consider ridge regression with the hypothesis $\hat{y} = A g(x; \theta)$, where~$A$ is a matrix and $g(x; \theta)$ is some vector of nonlinear features parameterized by~$\theta$. For the base-learner, they use
\begin{equation}
    \xi_i = \argmin_A \sum_{(x,y) \in \mathcal{D}_i^\mathrm{train}} \norm{y - A g(x; \theta)}_2^2 + \mu_\mathrm{ridge}\norm{A}_F^2,
\end{equation}
with regularization coefficient $\mu_\mathrm{ridge} > 0$ for the Frobenius norm $\norm{A}_F^2$, which admits a differentiable, closed-form solution. Instead of adapting~$\theta$ to each task~$i$ with a single gradient step, this approach leverages the convexity of ridge regression tasks to minimize the task loss analytically.

\section{Adaptive Control as a Base-Learner}\label{sec:method}

We now present the key idea of our paper, which uses meta-learning concepts introduced in \cref{sec:meta-learning} to tackle the problem of learning to control~\cref{eq:nonlinear-system}. For the moment, we assume we can simulate the dynamics function~$f$ in~\cref{eq:nonlinear-system} offline and that we have~$M$ samples $\{w_j(t)\}_{j=1}^M$ for $t \in [0,T]$ in~\cref{eq:nonlinear-system}; we will eliminate these assumptions in~\cref{sec:model-ensemble}. 

\subsection{Meta-Learning from Feedback and Adaptation}
\label{sec:meta-adaptive-control}
In meta-learning vernacular, we treat a target trajectory~$x^*_i(t) \in \R^n$ and disturbance signal $w_j(t) \in \R^d$ together over some time horizon~$T > 0$ as the training data ${\mathcal{D}_{ij}^\mathrm{train} = \{x^*_i(t),w_j(t)\}_{t \in [0,T]}}$ for task~$(i,j)$. We wish to learn the static, possibly shared parameters $\theta$ of an adaptive controller
\begin{equation}\begin{aligned}
    u &= \pi(x,x^*,\hat{a}; \theta) \\
    \dot{\hat{a}} &= \rho(x,x^*,\hat{a}; \theta)
\end{aligned}~,
\end{equation}
such that $(\pi,\rho)$ engenders good tracking of $x^*_i(t)$ for ${t \in [0,T]}$ subject to the disturbance~$w_j(t)$. Our adaptation mechanism is the forward-simulation of our closed-loop system, \ie in~\cref{eq:meta-learning} we have $\xi_{ij} = \{x_{ij}(t),\hat{a}_{ij}(t),u_{ij}(t)\}_{t\in[0,T]}$, where
\begin{equation}\label{eq:simulate}
\begin{aligned}
    x_{ij}(t) &= x_{ij}(0) + \int_0^T f(x_{ij}(t),u_{ij}(t),w_j(t))\,dt, \\
    \hat{a}_{ij}(t) &= \hat{a}_{ij}(0) + \int_0^T \rho(x_{ij}(t),x^*_i(t),\hat{a}_{ij}(t); \theta)\,dt, \\
    u_{ij}(t) &= \pi(x_{ij}(t),x^*_i(t),\hat{a}_{ij}(t); \theta),
\end{aligned}\end{equation}
which we can compute with any Ordinary Differential Equation (ODE) solver. For simplicity, we always set $x_{ij}(0) = x^*_i(0)$ and $a_{ij}(0) = 0$. Our task loss is simply the average tracking error for the same target-disturbance pair, \ie $\mathcal{D}_{ij}^\mathrm{eval} = \{x^*_i(t),w_j(t)\}_{t \in [0,T]}$ and
\begin{equation}\label{eq:tracking-loss}
    \ell_{ij}(\xi_{ij},\mathcal{D}_{ij}^\mathrm{eval}) 
    = \frac{1}{T} \int_0^{T} \rbr*{%
        \norm{x_{ij}(t) - x^*_i(t)}_2^2 + \mu_\mathrm{ctrl}\norm{u_{ij}(t)}_2^2
    }\,dt,
\end{equation}
where $\mu_\mathrm{ctrl} \geq 0$ regularizes the average control effort $\frac{1}{T}\int_0^T\norm{u_{ij}(t)}_2^2\,dt$. This loss is inspired by the Linear Quadratic Regulator (LQR) from optimal control, and can be generalized to weighted norms. Suppose we construct~$N$ target trajectories~$\{x^*_i(t)\}_{i=1}^N$ and sample~$M$ disturbance signals~$\{w_j(t)\}_{j=1}^M$, thereby creating~$NM$ tasks. Combining~\cref{eq:simulate} and~\cref{eq:tracking-loss} for all $(i,j)$ in the form of \cref{eq:meta-learning} then yields the meta-problem
\begin{equation}\label{eq:meta-adaptive-control}
\begin{aligned}
    \min_\theta \
    &\frac{1}{NMT} \rbr*{ \sum_{i=1}^N \sum_{j=1}^M \int_0^{T} c_{ij}(t) \,dt + \mu_\mathrm{meta}\norm{\theta}_2^2 }
    \\&\mathrm{s.t.}\enspace\begin{aligned}[t]
        c_{ij} &= \norm{x_{ij} - x^*_i}_2^2 + \mu_\mathrm{ctrl}\norm{u_{ij}}_2^2 \\
        \dot{x}_{ij} &= f(x_{ij}, u_{ij}, w_j),\  
            x_{ij}(0) = x^*_i(0) \\
        \dot{\hat{a}}_{ij} &= \rho(x_{ij}, x^*_i, \hat{a}_{ij}; \theta),\
            \hat{a}_{ij}(0) = 0 \\
        u_{ij} &= \pi(x_{ij}, x^*_i, \hat{a}_{ij}; \theta)
    \end{aligned}
\end{aligned}.
\end{equation}
Solving~\cref{eq:meta-adaptive-control} would yield parameters~$\theta$ for the adaptive controller~$(\pi,\rho)$ such that it works well on average in closed-loop tracking of~$\{x^*_i(t)\}_{i=1}^N$ for the dynamics~$f$, subject to the disturbances~$\{w_j(t)\}_{j=1}^M$. To learn the meta-parameters~$\theta$, we can perform gradient descent on~\cref{eq:meta-adaptive-control}. This requires back-propagating through an ODE solver, which can be done either directly or via the adjoint state method after solving the ODE forward in time \citep{PontryaginBoltyanskiiEtAl1962,ChenRubanovaEtAl2018,AnderssonGillisEtAl2019,MillardHeidenEtAl2020}. In addition, the learning problem \cref{eq:meta-adaptive-control} is \emph{semi-supervised}, in that~$\{w_j(t)\}_{j=1}^M$ are labelled samples and~$\{x^*_i(t)\}_{i=1}^N$ can be chosen freely. If there are some specific target trajectories we want to track at test time, we can use them in the meta-problem~\cref{eq:meta-adaptive-control}. This is an advantage of designing the offline learning problem in the context of the downstream control objective.

\subsection{Model Ensembling as a Proxy for Feedback Offline}
\label{sec:model-ensemble}

In practice, we cannot simulate the true dynamics~$f$ or sample an actual disturbance trajectory~$w(t)$ offline. Instead, we can more reasonably assume we have past data collected with some other, possibly poorly tuned controller. In particular, we assume access to trajectory data $\{\mathcal{T}_j\}_{j=1}^M$, such that
\begin{equation}\label{eq:trajectory-data}
    \mathcal{T}_j \defn \cbr[\big]{\rbr[\big]{
        t^{(k)}_j, x^{(k)}_j, u^{(k)}_j, t^{(k+1)}_{j}, x^{(k+1)}_{j}
    }}_{k=0}^{N_j-1},
\end{equation}
where~$x^{(k)}_j \defn x_j(t^{(k)}_j)$ and $u^{(k)}_j \defn u_j(t^{(k)}_j)$ were the state and control input, respectively, at time $t^{(k)}_j$.

Inspired by \citet{ClaveraRothfussEtAl2018}, since we cannot simulate the true dynamics~$f$ offline, we propose to first train a \emph{model ensemble} from the trajectory data~$\{\mathcal{T}_j\}_{j=1}^M$ to roughly capture the distribution of $f(\cdot,\cdot,w)$ over possible values of the disturbance~$w$. Specifically, we fit a model~$\hat{f}_j(x,u; \psi_j)$ with parameters~$\psi_j$ to each trajectory~$\mathcal{T}_j$, and use this as a proxy for~$f(x,u,w_j)$ in~\cref{eq:meta-adaptive-control}. The meta-problem~\cref{eq:meta-adaptive-control} is now
\begin{equation}\label{eq:meta-adaptive-control-ensemble}
\begin{aligned}
    \min_\theta \
    &\frac{1}{NMT} \rbr*{ \sum_{i=1}^N \sum_{j=1}^M \int_0^{T} c_{ij}(t) \,dt + \mu_\mathrm{meta}\norm{\theta}_2^2 }
    \\&\mathrm{s.t.}\enspace\begin{aligned}[t]
        c_{ij} &= \norm{x_{ij} - x^*_i}_2^2 + \mu_\mathrm{ctrl}\norm{u_{ij}}_2^2 \\
        \dot{x}_{ij} &= \hat{f}_j(x_{ij}, u_{ij}; \psi_j),\  
            x_{ij}(0) = x^*_i(0) \\
        \dot{\hat{a}}_{ij} &= \rho(x_{ij}, x^*_i, \hat{a}_{ij}; \theta),\
            \hat{a}_{ij}(0) = 0 \\
        u_{ij} &= \pi(x_{ij}, x^*_i, \hat{a}_{ij}; \theta)
    \end{aligned}
\end{aligned}
\end{equation}
This form is still semi-supervised, since each model~$\hat{f}_j$ is dependent on the trajectory data~$\mathcal{T}_j$, while~$\{x^*_i\}_{i=1}^N$ can be chosen freely. The collection~$\{\hat{f}_j\}_{j=1}^M$ is termed a model ensemble. Empirically, the use of model ensembles has been shown to improve robustness to model bias and train-test data shift of deep predictive models \citep{LakshminarayananPritzelEtAl2017} and policies in reinforcement learning \citep{RajeswaranGhotraEtAl2017,KurutachClaveraEtAl2018,ClaveraRothfussEtAl2018}. To train the parameters~$\psi_j$ of model~$\hat{f}_j$ on the trajectory data~$\mathcal{T}_j$, we do gradient descent on the one-step prediction problem
\begin{equation}\label{eq:ensemble-training}
\begin{aligned}
    \min_{\psi_j} \
    &\frac{1}{N_j}\rbr*{ \sum_{k=0}^{N_j-1} \norm[\big]{x_{j}^{(k+1)} - \hat{x}_{j}^{(k)}}_2^2 + \mu_\mathrm{ensem}\norm{\psi_j}_2^2 }
    \\[-0.5em]&\mathrm{s.t.}\enspace\begin{aligned}[t]
        \hat{x}_{j}^{(k+1)} &= x_j^{(k)} + \int_{t_j^{(k)}}^{t_{j}^{(k+1)}} \hat{f}_j(x(t), u_j^{(k)}; \psi_j) \,dt
    \end{aligned}
\end{aligned}
\end{equation}
where $\mu_\mathrm{ensem} > 0$ regularizes~$\psi_j$. Since we meta-train~$\theta$ in~\cref{eq:ensemble-training} to be adaptable to every model in the ensemble, we only need to roughly characterize how the dynamics $f(\cdot,\cdot,w)$ vary with the disturbance~$w$, rather than do exact model fitting of $\hat{f}_j$ to $\mathcal{T}_j$. Thus, we approximate the integral in \cref{eq:ensemble-training} with a single step of a chosen ODE integration scheme and back-propagate through this step, rather than use a full pass of an ODE solver.

\section{Incorporating Prior Knowledge About Robot Dynamics for Principled Control-Oriented Meta-Learning}
\label{sec:control-design}

So far our method has been agnostic to the choice of adaptive controller~$(\pi,\rho)$. However, if we have some prior knowledge of the dynamical system~\cref{eq:nonlinear-system}, we can use this to make a good choice of structure for~$(\pi,\rho)$. In \cref{sec:fully-actuated,sec:control-affine}, we review adaptive control designs for two general classes of nonlinear dynamical systems. Then, in \cref{sec:applying-our-method} we discuss how we can apply our meta-learning methodology from \cref{sec:method} to such designs.

\subsection{Fully-Actuated Lagrangian Systems}\label{sec:fully-actuated}
First, we consider the class of \emph{fully-actuated Lagrangian dynamical systems}, which includes robots such as manipulator arms and multicopters. The state of such a system is $x \defn (q,\dot{q})$, where $q(t) \in \R^d$ is the vector of generalized coordinates or degrees of freedom completely describing the configuration of the system at time~$t \in \R_{\geq 0}$. The nonlinear dynamics of such systems are fully described by
\begin{equation}\label{eq:lagrange}
    H(q)\ddot{q} + C(q,\dot{q})\dot{q} + g(q) = \tau_{q,\dot{q}}(u) + f_\mathrm{ext}(q,\dot{q}),
\end{equation}
where $H(q)$ is the positive-definite inertia matrix, $C(q,\dot{q})$ is the Coriolis matrix, $g(q)$ is the potential force, $\tau_{q,\dot{q}}(u)$ is the generalized input force with invertible map ${\tau_{q,\dot{q}} : \R^d \to \R^d}$ parameterized by the state $(q,\dot{q})$, and $f_\mathrm{ext}(q,\dot{q})$ summarizes any other external generalized forces. \citet{SlotineLi1987} studied adaptive control for~\cref{eq:lagrange} under the assumptions:
\begin{itemize}
    \item The matrix ${ \dot{H}(q,\dot{q}) - 2C(q,\dot{q}) }$ is always skew-symmetric. Since the vector $C(q,\dot{q})\dot{q}$ is uniquely defined, $C(q,\dot{q})$ can always be chosen such that this assumption holds \citep{SlotineLi1991}.

    \item The dynamics in~\cref{eq:lagrange} are linearly parameterizable, \ie
    \begin{equation}
        H(q)\dot{v} + C(q,\dot{q})v + g(q) - f_\mathrm{ext}(q,\dot{q}) 
        = Y(q,\dot{q},v,\dot{v})a,
    \end{equation}
    for some known matrix~$Y(q,\dot{q},v,\dot{v}) \in \R^{d \x p}$, any vectors $q,\dot{q},v,\dot{v} \in \R^d$, and constant unknown parameters~${a \in \R^p}$.
    \item The target trajectory for $x \defn (q,\dot{q})$ is of the form ${x^* = (q^*,\dot{q}^*)}$, where~$q^*(t)$ is twice-differentiable.
\end{itemize}
Under these assumptions, the adaptive controller
\begin{equation}\label{eq:slotine-li-controller}
\begin{aligned}[c]
    \tilde{q} &\defn q - q^* 
    \\
    s &\defn \dot{\tilde{q}} + \Lambda{\tilde{q}}
    \\
    v &\defn \dot{q}^* - \Lambda{\tilde{q}}
\end{aligned}\qquad\begin{aligned}[c]
    u &= \inv{\tau}_{q,\dot{q}}\rbr*{ Y(q,\dot{q},v,\dot{v})\hat{a} - Ks }
    \\
    \dot{\hat{a}} &= -\Gamma \tran{Y(q,\dot{q},v,\dot{v})} s
\end{aligned}
\end{equation}
ensures $x(t) = (q(t),\dot{q}(t))$ converges asymptotically to ${x^*(t) = (q^*(t),\dot{q}^*(t))}$, where $\tilde{q}, s, v \in \R^d$ are auxiliary variables and $(\Lambda, K, \Gamma)$ are any chosen positive-definite gain matrices. The proof of tracking convergence from \citet{SlotineLi1987} relies on the Lyapunov candidate function
\begin{equation}\label{eq:lyapunov-lagrange}
    V(q,\dot{q},q^*,\dot{q}^*,\hat{a},a) = \frac{1}{2}\tran{s}H(q)s + \frac{1}{2}\norm{\hat{a} - a}_{\inv{\Gamma}}^2,
\end{equation}
which comprises the energy term $\frac{1}{2}\tran{s}H(q)s$ that quantifies the distance between $(q,\dot{q})$ and $(q^*,\dot{q}^*)$, and a parameter error term $\frac{1}{2}\norm{\hat{a} - a}_{\inv{\Gamma}}^2$ weighted by $\inv{\Gamma} \succ 0$.

\subsection{Control-Affine Systems with Matched Uncertainty}\label{sec:control-affine}

In the previous section, control of the Lagrangian dynamics~\cref{eq:lagrange} was facilitated by their fully-actuated nature, i.e., by our ability to directly command the acceleration~$\ddot{q}$ with the control input. As a result, we could track any twice-differentiable target trajectory~$q^*$.

We now consider the broad class of nonlinear, control-affine dynamical systems of the form 
\begin{equation}\label{eq:control-affine}
    \dot{x} = f(x) + B(x)(u + g_\mathrm{ext}(x)),
\end{equation}
with state $x \in \R^n$ and control input $u \in \R^m$ such that ${m < n}$. We assume the functions ${f : \R^n \to \R^n}$ and ${B : \R^n \to \R^{n \x m}}$ are known, and $\rank(B(x)) = m$ for all~$x$. The function $g_\mathrm{ext} : \R^n \to \R^m$ representing external influences on the system is \emph{unknown}. The quantity $g_\mathrm{ext}(x)$ is termed a \emph{matched uncertainty} since, if it was known, it could be directly cancelled by the control input $u = -g_\mathrm{ext}(x)$.

We cannot directly influence the state derivative $\dot{x}$ through the input $u$ since $B(x)$ in \cref{eq:control-affine} is not invertible. As such, we cannot hope to track arbitrary target trajectories. Rather, we aim to make $x(t)$ track some target trajectory $x^*(t)$ from a known state-input pair $(x^*,u^*)$ that is \emph{nominally open-loop feasible}, i.e., satisfies the \emph{nominal dynamics}
\begin{equation}\label{eq:control-affine-nominal}
    \dot{x} = f(x) + B(x)u.
\end{equation}
If we knew $g_\mathrm{ext}$, the pair $(x^*,u^* - g_\mathrm{ext}(x^*))$ would then be open-loop feasible for the true dynamics~\cref{eq:control-affine}.

Fully-actuated Lagrangian systems nearly fit into the form of \cref{eq:control-affine-nominal} if we write \cref{eq:lagrange} with $f_\mathrm{ext}(q,\dot{q}) \equiv 0$ as
\begin{equation}\label{eq:fully-actuated-control-affine}
\begin{aligned}
    \dot{x} = \pmx{\dot{q} \\ -\inv{H(q)}( C(q,\dot{q})\dot{q} + g(q) ) } + \bmx{0 \\ \inv{H(q)}}\tau_{q,\dot{q}}(u),
\end{aligned}\end{equation}
where $x \defn (q,\dot{q})$. Then, given a twice-differentiable trajectory $q^*(t)$, any state-input pair $(x^*,u^*)$ of the form
\begin{equation}\begin{aligned}
    x^* &= (q^*, \dot{q}^*) \\
    u^* &= \inv{\tau}_{q^*,\dot{q}^*}\rbr*{ H(q^*)\ddot{q}^* + C(q^*,\dot{q}^*)\dot{q}^* + g(q^*) }
\end{aligned}\end{equation}
is open-loop feasible. However, \cref{eq:control-affine} also includes \emph{underactuated} Lagrangian systems, e.g.,
\begin{equation}\label{eq:underactuated}
\begin{aligned}
    \dot{x} = \pmx{\dot{q} \\ -\inv{H(q)}( C(q,\dot{q})\dot{q} + g(q) ) } + \bmx{0 \\ \inv{H(q)}}F(q,\dot{q})u,
\end{aligned}\end{equation}
where $F(q,\dot{q}) \in \R^{d \x m}$ with $m < d$, such that the map $\tau_{q,\dot{q}} \equiv F(q,\dot{q})$ is \emph{not} invertible.

Unlike fully-actuated Lagrangian systems, there is no universal control design that can stabilize systems of the forms~\cref{eq:control-affine-nominal} and~\cref{eq:underactuated}. Instead, controllers for such systems must be designed on a case-by-case basis. However, we can design a general adaptation law for such systems if a feedback controller and accompanying \emph{stability certificate} have already been designed for the nominal dynamics~\cref{eq:control-affine-nominal}, and if $g_\mathrm{ext}$ in~\cref{eq:control-affine} is linearly parameterizable, i.e.,
\begin{equation}\label{eq:control-affine-parametric}
    \dot{x} = f(x) + B(x)(u + Y(x)a),
\end{equation}
where $Y : \R^n \to \R^{m \x p}$ is known, and $a \in \R^p$ is \emph{unknown} yet fixed. To this end, we now present \cref{thm:adaptive-lyapunov}.

\begin{proposition}\label{thm:adaptive-lyapunov}
    Let $(x,u)$ be a state-input pair satisfying the dynamics \cref{eq:control-affine-parametric} with fixed parameters $a \in \R^p$, and let $(x^*,u^*)$ be a state-input pair satisfying the nominal dynamics~\cref{eq:control-affine-nominal}. Consider the quantity
    \begin{equation}
        V(x,x^*,\hat{a},a) \defn \bar{V}(x,x^*) + \frac{1}{2}\norm{\hat{a} - a}_{\inv{\Gamma}}^2,
    \end{equation}
    for any $\bar{V} : \R^n \x \R^n \to \R$ and $\Gamma \succ 0$, with $\hat{a}(t) \in \R^p$. Suppose we apply the adaptive controller
    \begin{equation}\label{eq:adaptive-controller-affine}
    \begin{aligned}
        u &= \bar{u} - Y(x)\hat{a} \\
        \dot{\hat{a}} &= \Gamma \tran{Y(x)}\tran{B(x)}\grad_x \bar{V}(x,x^*)
    \end{aligned}\end{equation}
    to the dynamics \cref{eq:control-affine-parametric} for any $\bar{u}(t) \in \R^m$. Then
    \begin{equation}\begin{aligned}
        \dot{V} = &\tran{\grad_x \bar{V}(x,x^*)}(f(x) + B(x)\bar{u}) \\
                  &+ \tran{\grad_{x^*} \bar{V}(x,x^*)}(f(x^*) + B(x^*)u^*)
    \end{aligned}~.\end{equation}
\end{proposition}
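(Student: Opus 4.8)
The plan is to differentiate $V$ along the closed-loop trajectories and exploit the structure of the adaptation law to cancel the parameter-error cross terms, in the same spirit as the Slotine--Li argument behind \cref{eq:lyapunov-lagrange}. First I would apply the chain rule to get $\dot{V} = \tran{\grad_x\bar{V}(x,x^*)}\dot{x} + \tran{\grad_{x^*}\bar{V}(x,x^*)}\dot{x}^* + \tran{(\hat{a}-a)}\inv{\Gamma}\dot{\hat{a}}$, using that $a$ is fixed so that differentiating $\tfrac{1}{2}\norm{\hat{a}-a}_{\inv{\Gamma}}^2$ yields $\tran{(\hat{a}-a)}\inv{\Gamma}\dot{\hat{a}}$; here the symmetry of $\inv{\Gamma}$, which follows from $\Gamma\succ0$, is what lets the factor of two absorb cleanly.

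Next I would substitute the dynamics. Plugging the feedback law $u = \bar{u} - Y(x)\hat{a}$ into \cref{eq:control-affine-parametric} gives $\dot{x} = f(x) + B(x)\bar{u} - B(x)Y(x)(\hat{a}-a)$, so the first term splits into a nominal-like piece $\tran{\grad_x\bar{V}}(f(x)+B(x)\bar{u})$ and a cross piece $-\tran{\grad_x\bar{V}}B(x)Y(x)(\hat{a}-a)$. Since $(x^*,u^*)$ satisfies the nominal dynamics \cref{eq:control-affine-nominal}, the second term equals $\tran{\grad_{x^*}\bar{V}}(f(x^*)+B(x^*)u^*)$, already in the desired final form. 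For the third term, substituting $\dot{\hat{a}} = \Gamma\tran{Y(x)}\tran{B(x)}\grad_x\bar{V}$ and cancelling $\inv{\Gamma}\Gamma = I$ gives $\tran{(\hat{a}-a)}\tran{Y(x)}\tran{B(x)}\grad_x\bar{V}$.

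The key step is then to observe that this third term is exactly the negative of the cross piece coming from $\dot{x}$: both are scalars, and $\tran{(\hat{a}-a)}\tran{Y(x)}\tran{B(x)}\grad_x\bar{V}$ is the transpose of $\tran{\grad_x\bar{V}}B(x)Y(x)(\hat{a}-a)$, hence equal to it. The two therefore cancel, leaving only the two nominal-like terms, which is the claimed identity. I do not anticipate a genuine obstacle here — the computation is short — the only care needed is the bookkeeping around transposes of scalar quantities and the constancy of $a$; conceptually, the adaptation law $\dot{\hat{a}} = \Gamma\tran{Y(x)}\tran{B(x)}\grad_x\bar{V}$ has been reverse-engineered precisely to annihilate the term that the matched uncertainty $B(x)Y(x)a$ injects into $\dot{V}$.
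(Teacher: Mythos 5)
Your proposal is correct and follows essentially the same route as the paper's proof: differentiate $V$ by the chain rule, substitute the closed-loop dynamics and the nominal dynamics of $(x^*,u^*)$, and use the adaptation law so that the parameter-error term exactly cancels the cross term $-\tran{\grad_x\bar{V}}B(x)Y(x)(\hat{a}-a)$. The only cosmetic difference is that you cancel $\inv{\Gamma}\Gamma$ immediately and invoke the scalar-transpose identity, whereas the paper groups the terms as $\tran{(\dot{\hat{a}} - \Gamma\tran{Y(x)}\tran{B(x)}\grad_x\bar{V})}\inv{\Gamma}(\hat{a}-a)$ before setting it to zero — the two are equivalent.
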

\begin{proof}
    Taking the time derivative of $V(x,x^*,\hat{a})$ along any trajectory $(x(t),x^*(t),\hat{a}(t))$, we have
    \begin{equation}\begin{aligned}
        \dot{V}
        &= \tran{\grad_x\bar{V}(x,x^*)}\dot{x} + \tran{\grad_{x^*}\bar{V}(x,x^*)}\dot{x}^* + \tran{\dot{\hat{a}}}\inv{\Gamma}(\hat{a} - a)
        \\
        &= \begin{aligned}[t]
            &\tran{\grad_x\bar{V}(x,x^*)}\rbr*{f(x) + B(x)(u + Y(x)a)} \\
            &+ \tran{\grad_{x^*}\bar{V}(x,x^*)}\rbr*{f(x^*) + B(x^*)u^*} \\
            &+ \tran{\dot{\hat{a}}}\inv{\Gamma}(\hat{a} - a)
        \end{aligned}
        \\
        &= \begin{aligned}[t]
            &\tran{\grad_x\bar{V}(x,x^*)}\rbr*{f(x) + B(x)\bar{u} - Y(x)(\hat{a} - a))} \\
            &+ \tran{\grad_{x^*}\bar{V}(x,x^*)}\rbr*{f(x^*) + B(x^*)u^*} \\
            &+ \tran{\dot{\hat{a}}}\inv{\Gamma}(\hat{a} - a)
        \end{aligned}
        \\
        &= \begin{aligned}[t]
            &\tran{\grad_x\bar{V}(x,x^*)}\rbr*{f(x) + B(x)\bar{u}} \\
            &+ \tran{\grad_{x^*}\bar{V}(x,x^*)}\rbr*{f(x^*) + B(x^*)u^*} \\
            &+ \tran{\rbr*{ \dot{\hat{a}} - \Gamma\tran{Y(x)}\tran{B(x)}\grad_x\bar{V}(x,x^*) }}\inv{\Gamma}(\hat{a} - a)
        \end{aligned}
        \\
        &= \begin{aligned}[t]
            &\tran{\grad_x\bar{V}(x,x^*)}\rbr*{f(x) + B(x)\bar{u}} \\
            &+ \tran{\grad_{x^*}\bar{V}(x,x^*)}\rbr*{f(x^*) + B(x^*)u^*}
        \end{aligned}
    \end{aligned}~,\end{equation}
    as required.
\end{proof}

Essentially, the adaptive controller \cref{eq:adaptive-controller-affine} ensures the scalar quantity $V(x,x^*,\hat{a},a)$ evolves along trajectories of the dynamics \cref{eq:control-affine-parametric} in the same manner as $\bar{V}(\bar{x},x^*)$ does with $(\bar{x},\bar{u})$ satisfying the nominal dynamics~\cref{eq:control-affine-nominal}, for any nominal control signal $\bar{u}(t) \in \R^m$.

As a result, the objectives of controller and adaptation design are de-coupled. We can first design a stabilizing feedback policy $\bar{u} = \bar{\pi}(\bar{x}, x^*, u^*)$ such that any trajectory $\bar{x}(t)$ of the closed-loop nominal system ${\dot{\bar{x}} = f(\bar{x}) + B(\bar{x})\bar{\pi}(\bar{x}, x^*, u^*)}$ is guaranteed to converge to~$x^*(t)$ by the accompanying certificate function~$\bar{V}$. Then, we can augment $(\bar{\pi}, \bar{V})$ with \cref{thm:adaptive-lyapunov} to adaptively stabilize the true dynamics \cref{eq:control-affine-parametric}.

\citet{BoffiTuEtAl2020} discuss the case where $\bar{V}$ is a Lyapunov function in the sense of Lyapunov's direct method \citep{Lyapunov1892} with $x^*(t) \equiv 0$. However, we stress that \cref{thm:adaptive-lyapunov} holds for any scalar quantity $\bar{V}(x,x^*)$, and thus can be used to construct adaptive controllers from a function $\bar{V}$ that certifies stability of the nominal dynamics predicated on other ``Lyapunov-like'' analysis tools, such as LaSalle's invariance principle \citep{LaSalle1960}, Barb\u{a}lat's lemma \citep{Barbalat1959}, or incremental stability \citep{LohmillerSlotine1998,Angeli2002}.

\subsection{Control-Oriented Meta-Learning for Nonlinear Adaptive Control}\label{sec:applying-our-method}

In \cref{sec:method}, we presented our framework for control-oriented meta-learning abstracted to learn a general parametric feedback controller $u = \pi(x,x^*,\hat{a};\theta)$ and adaptation law $\dot{\hat{a}} = \rho(x,x^*,\hat{a};\theta)$ with data collected from a dynamical system $\dot{x} = f(x,u,w)$. Then, in \cref{sec:fully-actuated,sec:control-affine} we studied particular adaptive controller designs for two large classes of nonlinear dynamical systems. Now, we instantiate our control-oriented meta-learning method alongside these principled adaptive controller designs.

\subsubsection{Fully-Actuated Lagrangian Systems}
The adaptive controller~\cref{eq:slotine-li-controller} requires the nonlinearities in the dynamics~\cref{eq:lagrange} to be known a priori. While \citet{NiemeyerSlotine1991} showed these can be systematically derived for~$H(q)$, $C(q,\dot{q})$, and~$g(q)$, there exist many external forces~$f_\mathrm{ext}(q,\dot{q})$ of practical importance in robotics for which this is difficult to do, such as aerodynamic and contact forces. Thus, we consider the case when~$H(q)$, $C(q,\dot{q})$, and~$g(q)$ are known and~$f_\mathrm{ext}(q,\dot{q})$ is unknown. Moreover, we want to approximate~$f_\mathrm{ext}(q,\dot{q})$ with the neural network
\begin{equation}\label{eq:nn_lagrange}
    \hat{f}_\mathrm{ext}(q,\dot{q}; \hat{A}, \varphi) \defn \hat{A}y(q,\dot{q}; \varphi),
\end{equation}
where the features $y(q,\dot{q}; \varphi) \in \R^p$ consist of all the hidden layers of the network parameterized by~$\varphi$, and~$\hat{A} \in \R^{d \x p}$ is the output layer. Inspired by~\cref{eq:slotine-li-controller}, we consider the adaptive controller
\begin{equation}\label{eq:slotine-li-controller-parametric}
\begin{aligned}[c]
    \tilde{q} &\defn q - q^* 
    \\
    s &\defn \dot{\tilde{q}} + \Lambda{\tilde{q}}
    \\
    v &\defn \dot{q}^* - \Lambda{\tilde{q}}
\end{aligned}\qquad\begin{aligned}[c]
    \bar{\tau} &\defn H(q)\dot{v} + C(q,\dot{q})v + g(q) - Ks
    \\
    u &= \inv{\tau}_{q,\dot{q}}\rbr*{ \bar{\tau} - \hat{A}y(q,\dot{q}; \varphi) }
    \\
    \dot{\hat{A}} &= \Gamma s \tran{y(q,\dot{q}; \varphi)}
\end{aligned}.
\end{equation}
If $f_\mathrm{ext}(q,\dot{q}) = \hat{f}_\mathrm{ext}(q,\dot{q}; \hat{A}, \varphi)$ for fixed values~$\varphi$ and~$\hat{A}$, then the adaptive controller~\cref{eq:slotine-li-controller-parametric} guarantees tracking convergence \citep{SlotineLi1987}. In general, we do not know such a value for~$\varphi$, and we must choose the gains~$(\Lambda, K, \Gamma)$. Since~\cref{eq:slotine-li-controller-parametric} is parameterized by $\theta \defn (\varphi, \Lambda, K, \Gamma)$, we can meta-learn~\cref{eq:slotine-li-controller-parametric} with the method described in \cref{sec:method}. In order to include the positive-definite gains $(\Lambda, K, \Gamma)$ in any gradient-descent-based training loop, we can use an unconstrained parameterization for each gain matrix. Specifically, any $n \times n$ positive-definite matrix~$Q$ can be uniquely defined by~${\frac{1}{2}n(n+1)}$ unconstrained parameters. We use the log-Cholesky parameterization \citep{PinheiroBates1996} for any positive-definite matrix $Q$, which takes the form $Q = L\tran{L}$ with the lower-triangular matrix
\begin{equation}\setlength{\arraycolsep}{2pt}\hspace{-0.5em}
    L = \bmx{
        \exp\theta_1                    &0                          &\cdots                     &\cdots                     &0 \\
        \theta_{n+1}                    &\exp\theta_2               &0                          &\rotatebox{10}{$\ddots$}   &\vdots \\
        \theta_{n+2}                    &\theta_{n+3}               &\exp\theta_3               &\rotatebox{10}{$\ddots$}   &\vdots \\
        \vdots                          &\rotatebox{10}{$\ddots$}   &\rotatebox{10}{$\ddots$}   &\rotatebox{10}{$\ddots$}   &0 \\
        \theta_{\frac{1}{2}n(n-1) + 2}  &\cdots                     &\cdots                     &\theta_{\frac{1}{2}n(n+1)} &\exp\theta_n
    }
\end{equation}
formed using unconstrained parameters $\theta \in \R^{\frac{1}{2}n(n+1)}$.

While for simplicity we consider known $H(q)$, $C(q,\dot{q})$, and $g(q)$, we can extend to the case when they are linearly parameterizable, \eg $H(q)\dot{v} + C(q,\dot{q})v + g(q) = Y(q,\dot{q},v,\dot{v})a\ $ with $Y(q,\dot{q},v,\dot{v})$ a fixed feature matrix consisting of, e.g., those systematically computed with prior knowledge of the dynamics \citep{NiemeyerSlotine1991}. In this case, we would then maintain a separate adaptation law~${\dot{\hat{a}} = -P \tran{Y(q,\dot{q},v,\dot{v})} s}$ with adaptation gain $P \succ 0$ in our proposed adaptive controller~\cref{eq:slotine-li-controller-parametric}. It is also possible to construct additional features as products of known terms with parametric \citep{SannerSlotine1995} and non-parametric \citep{BoffiTuEtAl2021} approximators.

\subsubsection{Underactuated Control-Affine Systems} 
In a manner similar to~\cref{eq:nn_lagrange}, we can approximate~$g_\mathrm{ext}(x)$ in~\cref{eq:control-affine} with the neural network
\begin{equation}\label{eq:nn_underactuated}
    \hat{g}_\mathrm{ext}(x; \hat{A}, \varphi) \defn \hat{A}y(x; \varphi),
\end{equation}
with features $y(x; \varphi)\in \R^p$ and output layer~${\hat{A} \in \R^{m \x p}}$. Then, given a stabilizing feedback controller ${\bar{\pi} : \R^n \x \R^n \x \R^m \to \R^m}$ and accompanying certificate function ${\bar{V} : \R^n \x \R^n \to \R}$ for the nominal dynamics \cref{eq:control-affine-nominal}, we can apply the proof of \cref{thm:adaptive-lyapunov} to the augmented certificate function
\begin{equation}
    V(x,x^*,\hat{A},A) \defn \bar{V}(x,x^*) + \frac{1}{2}\norm{\hat{A} - A}_{\inv{\Gamma}}^2,
\end{equation}
with the squared weighted trace norm
\begin{equation}
    \norm{\hat{A} - A}_{\inv{\Gamma}}^2 \defn \tr\rbr*{ \tran{(\hat{A} - A)} \inv{\Gamma} (\hat{A} - A) }.
\end{equation}
The result is the adaptive controller
\begin{equation}\label{eq:adaptive-control-affine-parametric}
\begin{aligned}
    u &= \bar{\pi}(x,x^*,u^*; \kappa) - \hat{A}y(x; \varphi) \\
    \dot{\hat{A}} &= \Gamma \tran{B(x)}\grad_x \bar{V}(x,x^*; \kappa)\tran{y(x; \varphi)}
\end{aligned}~,\end{equation}
where we have included the placeholder parameters~$\kappa$ for any control gains analogous to $(\Lambda, K)$ in~\cref{eq:slotine-li-controller-parametric}. The closed-loop system induced by \cref{eq:adaptive-control-affine-parametric} is parameterized overall by $\theta \defn (\varphi, \kappa)$, which can thus be meta-learned with the method described in~\cref{sec:method}.

\section{Experiments}\label{sec:experiments}
We evaluate our method in simulation on two example systems: a Planar Fully-Actuated Rotorcraft (PFAR), and the classic \emph{underactuated} Planar Vertical Take-Off and Landing (PVTOL) vehicle from \citet{HauserSastryEtAl1992}. The PFAR system dynamics are governed by the nonlinear equations of motion
\begin{equation}\label{eq:pfar}
    \pmx{\ddot{x} \\ \ddot{y} \\ \ddot{\phi}}
    = \underbrace{\pmx{0 \\ -g \\ 0}}_{\eqqcolon f_g} + \underbrace{\bmx{
        \cos\phi  & -\sin\phi   & 0 \\
        \sin\phi  &  \cos\phi   & 0 \\
        0         & 0           & 1
    }}_{\eqqcolon R(\phi)}u + f_\mathrm{ext},
\end{equation}
while the PVTOL system dynamics are given by
\begin{equation}\label{eq:pvtol}
    \pmx{\ddot{x} \\ \ddot{y} \\ \ddot{\phi}}
    = f_g + \underbrace{R(\phi)\bmx{0 & 0 \\ 1 & 0 \\ 0 & 1}}_{\eqqcolon B(\phi)}u + f_\mathrm{ext}.
\end{equation}
Both systems have the degrees of freedom ${q \defn (x,y,\phi)}$, where $(x,y)$ is the position of the center of mass in the inertial frame and $\phi$ is the roll angle. In addition, ${ g = 9.81~\mathrm{m/s^2} }$ is the gravitational acceleration, $f_g \in \R^3$ is the mass-normalized gravitational force in vector form, $R(\phi)$ is a rotation matrix between inertial and body-fixed frames, and $f_\mathrm{ext}$ is some unknown external mass-normalized force.

The PFAR system is fully-actuated with control input ${ u \in \R^3 }$ that directly commands the thrust along the body-fixed $x$-axis, thrust along the body-fixed $y$-axis, and torque about the center of mass. We depict an exemplary PFAR design in \cref{fig:pfar} inspired by thriving interest in fully- and over-actuated multirotor vehicles in the robotics literature \citep{RyllMuscioEtAl2017,KamelVerlingEtAl2018,BrescianiniDAndrea2018,ZhengTanEtAl2020,RashadGoerresEtAl2020}. On the other hand, the PVTOL system is \emph{underactuated} with control input $u \in \R^2$ that directly commands only the thrust along the body $x$-axis and torque about the center of mass.

In our simulations for both systems, $f_\mathrm{ext}$ is a mass-normalized quadratic drag force, due to the velocity of the vehicle relative to wind blowing at a velocity~$w \in \R$ along the inertial $x$-axis. Specifically, this drag force is described by the equations
\begin{equation}\label{eq:drag}
\begin{aligned}
    v_x &\defn (\dot{x} - w)\cos\phi + \dot{y}\sin\phi 
    \\
    v_y &\defn -(\dot{x} - w)\sin\phi + \dot{y}\cos\phi 
    \\
    v_L &\defn -\dot{\phi} - v_y
    \\
    v_R &\defn \dot{\phi} - v_y
    \\
    f_\mathrm{ext} &= - \tran{ R(\phi) }\pmx{\beta_1v_1\abs{v_1} \\ \beta_2v_2\abs{v_2} \\ \beta_3(v_R\abs{v_R} - v_L\abs{v_L})}
\end{aligned},\end{equation}
where $\beta_1, \beta_2, \beta_3 > 0$ are aggregate drag coefficients. Each body-fixed component of the drag force opposes a corresponding body-fixed component of either the linear or rotational velocity of the vehicle. In the case of the PVTOL system, $f_\mathrm{ext}$ is a matched uncertainty if and only if $\beta_1 = 0$; instead, we use a small non-zero value of $\beta_1$ to make $f_\mathrm{ext}$ an \emph{unmatched} uncertainty and therefore a difficult disturbance for the PVTOL system to overcome. In particular, we use ${\beta_1 = 0.01}$, ${\beta_2 = 1}$, and ${\beta_3 = 1}$ in all of our simulations.

\subsection{Nominal Feedback Control}
To meta-learn an adaptive controller using our method described in \cref{sec:method}, we require trajectory data $\{\mathcal{T}_j\}_{j=1}^M$ of the form \cref{eq:trajectory-data} collected on the system of interest. In order to collect this data for either the PFAR system or PVTOL system, we need to: 1) derive a feedback controller using the known nominal dynamics, 2) generate nominally open-loop feasible trajectories, and 3) record state and input measurements in closed-loop with the feedback controller while trying to track the generated trajectories. That is, in this stage, we do \emph{not} know anything about $f_\mathrm{ext}$; in a simulation environment, this amounts to the details of $f_\mathrm{ext}$ in~\cref{eq:drag} being unavailable to any feedback controller. Thus, we must do our best to collect data using a feedback controller derived for the nominal dynamics (i.e., with ${f_\mathrm{ext} \equiv 0}$). 

\subsubsection{PFAR Feedback Control}
To this end, for the PFAR system we rely on the Proportional-Derivative (PD) controller with feed-forward described by
\begin{equation}\label{eq:pd}
    u = \tran{R(\phi)}\rbr*{
        \ddot{q}^* - f_g - K_P\tilde{q} - K_D\dot{\tilde{q}}
    },
\end{equation}
with gains $K_P, K_D \succ 0$. For $f_\mathrm{ext}(q,\dot{q}) \equiv 0$, this controller feedback-linearizes the dynamics~\cref{eq:pfar} around any twice-differentiable trajectory~$q^*(t)$, such that the error ${\tilde{q} \defn q - q^*}$ is governed by an exponentially stable ODE.

\subsubsection{PVTOL Feedback Control} Feedback control for the PVTOL system is complicated considerably by the underactuated nature of its dynamics. To begin, we can generate nominally open-loop feasible trajectories by leveraging the fact that the PVTOL dynamics \cref{eq:pvtol} with $f_\mathrm{ext} \equiv 0$ are differentially flat with flat outputs $(x,y)$ \citep{Ailon2010}. Indeed, given any four-times-differentiable position trajectory ${(x^*(t), y^*(t)) \in \R^2}$, the state-input pair $\rbr{ (x^*, y^*, \phi^*, \dot{x}^*, \dot{y}^*, \dot{\phi}^*), (u_1^*, u_2^*) }$ satisfying
\begin{equation}\begin{aligned}
    \phi^* &= \arctan\rbr*{ \frac{-\ddot{x}^*}{\ddot{y}^*{+}g} }
    \\[0.5em]
    u_1^* &= \sqrt{(\ddot{x}^*)^2 + (\ddot{y}^*{+}g)^2}
    \\[0.5em]
    \dot{\phi}^* &= \frac{ \ddot{x}^*\dddot{y}^* - \dddot{x}^*(\ddot{y}^*{+}g) }{ (u_1^*)^2 }
    \\[0.5em]
    u_2^* &= \frac{ \ddot{x}^*\ddddot{y}^* - \ddddot{x}^*(\ddot{y}^*{+}g) - 2\dot{\phi}^*(\ddot{x}^*\dddot{x}^* + (\ddot{y}^*{+}g)\dddot{y}^*) }{ (u_1^*)^2 }
\end{aligned}\end{equation}
is open-loop feasible for the nominal PVTOL dynamics in~\cref{eq:pvtol}. Given such a trajectory $(x^*(t), y^*(t))$, \citet{Ailon2010} proved that the feedback controller described by
\begin{equation}\label{eq:pvtol-feedback}
\begin{aligned}
    \tilde{x} &\defn x - x^*
    \\
    \tilde{y} &\defn y - y^*
    \\
    v_x &\defn \ddot{x}^* - c_{x1}\tanh(k_{x1}\tilde{x}{+}k_{x2}\dot{\tilde{x}}) 
        - c_{x2}\tanh({k_{x2}\dot{\tilde{x}}})
    \\
    v_y &\defn \ddot{y}^* - c_{y1}\tanh(k_{y1}\tilde{y}{+}k_{y2}\dot{\tilde{y}}) 
        - c_{y2}\tanh({k_{y2}\dot{\tilde{y}}})
    \\
    v_\phi &\defn \arctan\rbr*{ \frac{-v_x}{v_y + g} }
    \\
    u_1 &= \sqrt{v_x^2 + (v_y + g)^2}
    \\
    u_2 &= \ddot{v}_\phi - k_{\phi 1}(\phi - v_\phi) - k_{\phi 2}(\dot{\phi} - \dot{v}_\phi)
\end{aligned}~,\end{equation}
with gains $c_x, c_y, k_x, k_y, k_\phi \in \R^2_{>0}$, ensures that the state $(x, y, \phi, \dot{x}, \dot{y}, \dot{\phi})$ asymptotically converges to the target trajectory $(x^*, y^*, \phi^*, \dot{x}^*, \dot{y}^*, \dot{\phi}^*)$. To do this, the feedback controller~\cref{eq:pvtol-feedback} specifies bounded virtual inputs $(v_x, v_y)$ with a thrust $u_1$ and desired roll angle $v_\phi$ that together would ensure convergence of the $(x,y)$-subsystem to the desired trajectory. Then, since we can only command $\ddot{\phi}$ and not  $\phi$ directly, the feedback controller~\cref{eq:pvtol-feedback} applies an outer PD loop via $u_2$ to regulate $\phi$ towards $v_\phi$.

Later in \cref{sec:baselines}, we will need a stability certificate function~$\bar{V}$ to accompany the nominal feedback controller~\cref{eq:pvtol-feedback} in comprising a meta-learned adaptive controller of the form~\cref{eq:adaptive-control-affine-parametric}. To this end, we now take a moment to identify such a certificate. The proof from \citet{Ailon2010} of asymptotic tracking convergence when using~\cref{eq:pvtol-feedback} relies on the component certificate functions 
\begin{equation}\label{eq:pvtol-lyapunov-xy}
\begin{aligned}
    \bar{V}_x &\defn\begin{aligned}[t]
        &c_{x1}\log\cosh(k_{x1}x + k_{x2}\dot{x}) \\
        &+ c_{x2}\log\cosh({k_{x2}\dot{x}}) + \frac{k_{x1}}{2}\dot{x}^2
    \end{aligned}
    \\
    \bar{V}_y &\defn\begin{aligned}[t]
        &c_{y1}\log\cosh(k_{y1}y + k_{y2}\dot{y}) \\
        &+ c_{y2}\log\cosh({k_{y2}\dot{y}}) + \frac{k_{y1}}{2}\dot{y}^2
    \end{aligned}
\end{aligned}~.\end{equation}
The stability of the linear second-order ODE for the roll dynamics induced by the choice of $u_2$ in \cref{eq:pvtol-feedback} is verified by the quadratic component Lyapunov function
\begin{equation}\label{eq:pvtol-lyapunov-phi}
\begin{aligned}
    \bar{V}_\phi &\defn \frac{1}{2}\tran{ \pmx{\tilde{\phi} \\ \dot{\tilde{\phi}}} }
    \bmx{k_{\phi 1}(k_{\phi 1} + 1) + k_{\phi 2}^2 & k_{\phi 2} \\ 
         k_{\phi 2} & k_{\phi 1} + 1}
    \pmx{\tilde{\phi} \\ \dot{\tilde{\phi}}}
\end{aligned}\end{equation}
with $\tilde{\phi} \defn \phi - v_\phi$ \citep[Example~5.12]{AstromMurray2020}. In a manner akin to backstepping \citep[Lemma~14.2]{Khalil2002}, we can add the component certificate functions from \cref{eq:pvtol-lyapunov-xy} and \cref{eq:pvtol-lyapunov-phi} to get the overall certificate function
\begin{equation}\label{eq:pvtol-lyapunov}
    \bar{V} \defn \bar{V}_x + \bar{V}_y + \bar{V}_\phi
\end{equation}
for the nominal PVTOL dynamics in closed loop with the feedback controller~\cref{eq:pvtol-feedback}.

\citet{Ailon2010} shows that $\bar{V}_x$ and $\bar{V}_y$ are globally positive-definite for the $(x,y)$-subsystem, yet $\dot{\bar{V}}_x$ and $\dot{\bar{V}}_y$ are \emph{not} globally negative-definite as a consequence of coupling with the roll dynamics of the PVTOL systems. As a result, Lyapunov's direct method cannot be used in a straightforward manner to show tracking convergence; instead, \citet{Ailon2010} shows that $\dot{\bar{V}}_x$ and $\dot{\bar{V}}_y$ always become negative in finite-time and remain negative thereafter. Regardless, as discussed in \cref{sec:control-affine} and \cref{thm:adaptive-lyapunov}, we can still use the certificate function~\cref{eq:pvtol-lyapunov} to construct an adaptive controller for the PVTOL system.

\subsection{Data Collection and Meta-Training}\label{sec:data-collection-training}
In the previous section, we derived feedback controllers for tracking nominally open-loop feasible trajectories from prior knowledge of each dynamical system for $f_\mathrm{ext} \equiv 0$. In this section, we describe how we generate such trajectories and use these feedback controllers in closed loop with the true dynamics (i.e., \cref{eq:pfar} and \cref{eq:pvtol}) to collect trajectory data~$\{\mathcal{T}_j\}_{j=1}^M$ of the form~\cref{eq:trajectory-data}.

Generating nominally feasible open-loop trajectories for the PFAR system~\cref{eq:pfar} is equivalent to generating twice-differentiable target trajectories in $(x,y,\phi)$-space. For the PVTOL system~\cref{eq:pvtol}, it is instead equivalent to generating four-times-differentiable target trajectories in $(x,y)$-space. In our experiments for either case, we use a routine to construct sufficiently smooth polynomial spline trajectories. In particular, we follow \citet{MellingerKumar2011} and \citet{RichterBryEtAl2013} in posing trajectory generation as a polynomial spline optimization problem. Specifically, we synthesize a trajectory $\mathcal{T}_j$ of training data as follows:
\begin{enumerate}
    \item Generate a uniform random walk of points in either $(x,y,\phi)$-space for the PFAR system or $(x,y)$-space for the PVTOL system.
    
    \item Fit a 30-second smooth polynomial spline trajectory~$(x^*(t),y^*(t),\phi^*(t)) \in \R^3$ for the PFAR system or~$(x^*(t),y^*(t)) \in \R^2$ for the PVTOL system to the random walk, with minimum snap in $(x^*,y^*)$ and minimum acceleration in $\phi^*$, according to the work by \citet{MellingerKumar2011} and \citet{RichterBryEtAl2013}.
    
    \item Sample a wind velocity $w$ from the training distribution in~\cref{fig:wind_dist}.
    
    \item Simulate the closed-loop dynamics of the system with the external drag force~\cref{eq:drag} to track the generated trajectory. The feedback controller has no knowledge of this drag force. For the PFAR system, we use the PD controller~\cref{eq:pd} with $K_P = 10I$ and $K_D = 0.1I$. For the PVTOL system, we use the differential-flatness-based controller~\cref{eq:pvtol-feedback} with ${c_x = c_y = k_x = k_y = k_\phi = (1,1)}$. Each of these controllers represents a ``first try'' at controlling the system in order to collect data. We record time, state, and control input measurements at~$100~\mathrm{Hz}$. 
\end{enumerate}
We record $500$~such trajectories and then sample~$M$ of them to form the training data~$\{\mathcal{T}_j\}_{j=1}^M$ for various~$M$ to evaluate the sample efficiency of our method and the baseline methods described in~\cref{sec:baselines}. That is, each trajectory $\mathcal{T}_j$ corresponds to a single wind velocity~$w$, and so a larger value of $M$ corresponds to more training data with an implicitly better representation of the training distribution in~\cref{fig:wind_dist}.

Now that we have trajectory data~$\{\mathcal{T}_j\}_{j=1}^M$ of the form~\cref{eq:trajectory-data}, we can apply our meta-learning method from \cref{sec:method} to train an adaptive controller of the form~\cref{eq:slotine-li-controller-parametric} for the PFAR system and an adaptive controller of the form~\cref{eq:adaptive-control-affine-parametric} for the PVTOL system. Specifically, the adaptive controller for the PVTOL system leverages the nominal differential-flatness-based feedback controller~\cref{eq:pvtol-feedback} and the accompanying certificate function~\cref{eq:pvtol-lyapunov}. The meta-parameters for the PFAR adaptive controller when using our meta-learning method altogether are
\begin{equation}
    \theta_\mathrm{ours} = (\varphi, \Lambda, K, \Gamma),
\end{equation}
where $\varphi$ are the parameters of the neural network features $y(q,\dot{q};\varphi)$ used in both the feedback and adaptation laws, $\Lambda, K \succ 0$ are controller gains, and $\Gamma \succ 0$ is the adaptation gain. For the PVTOL system, the meta-parameters are
\begin{equation}
    \theta_\mathrm{ours} = (\varphi, c_x, c_y, k_x, k_y, k_\phi, \Gamma),
\end{equation}
where $c_x, c_y, k_x, k_y, k_\phi \in \R^2_{> 0}$ are the controller gains and $\Gamma \succ 0$ is the adaptation gain. In both cases, our meta-learning method trains a dynamics model, controller, and adaptation law together in an end-to-end fashion. 

As we detailed in~\cref{sec:method}, offline simulation of the resulting adaptive closed-loop system yields the meta-loss function~\cref{eq:meta-adaptive-control-ensemble} of the meta-parameters~$\theta_\mathrm{ours}$. To compute the integral in~\cref{eq:meta-adaptive-control-ensemble}, we use a fourth-order Runge-Kutta scheme with a fixed time step of $0.01~\mathrm{s}$. We back-propagate gradients through this computation in a manner similar to \citet{ZhuangDvornekEtAl2020}, rather than using the adjoint method for neural ODEs \citep{ChenRubanovaEtAl2018}, due to our observation that the backward pass is sensitive to any numerical error accumulated along the forward pass during closed-loop control simulations. We present additional hyperparameter choices and training details in~\cref{app:training}.

\subsection{Baselines}\label{sec:baselines}
We compare our meta-trained adaptive controllers in trajectory tracking tasks against two types of baseline controllers.

\subsubsection{Nominal Feedback Control} Our first baseline for each system is based on the nominal controller originally used to collect data. For the PVTOL system, we simply use the non-adaptive, differential-flatness-based controller~\cref{eq:pvtol-feedback}. For the PFAR system, we use a Proportional-Integral-Derivative (PID) controller with feed-forward, \ie
\begin{equation}\label{eq:pid}
    u = \tran{R(\phi)}\rbr*{
        \ddot{q}^* - f_g - K_P\tilde{q} - K_I\!\int_0^t\tilde{q}(\eta)\,d\eta - K_D\dot{\tilde{q}}
    },
\end{equation}
with gains $K_P, K_I, K_D \succ 0$. This augments the original controller~\cref{eq:pd} with an integral term that tries to compensate for $f_\mathrm{ext}(q,\dot{q}) \not\equiv 0$. If we set ${K_P = K\Lambda + \Gamma}$, ${K_I = \Gamma\Lambda}$, and ${K_D = K + \Lambda}$, this makes the PID controller~\cref{eq:pid} equivalent to the adaptive controller~\cref{eq:slotine-li-controller-parametric} for the PFAR dynamics~\cref{eq:pfar} with $y(q,\dot{q};\varphi) \equiv 1$ (\ie constant features), $\tilde{q}(0) = 0$ (\ie zero initial position error), and ${\hat{A}(0) = 0}$ (\ie adapted parameters are initially zero). To show this, we combine the expressions in the adaptive controller~\cref{eq:slotine-li-controller-parametric} to get
\begin{equation}\begin{aligned}
    \tau_{q,\dot{q}}(u) =\ 
    &H(q)\ddot{q}^* + C(q,\dot{q})\dot{q}^* + g(q) - A(0)y(q,\dot{q}) \\
    &- (K + C(q,\dot{q}))\Lambda\tilde{q} - (H(q)\Lambda + K)\dot{\tilde{q}} \\
    &- \Gamma\rbr*{ \int_0^t \dot{\tilde{q}}(\eta)\tran{y(q(\eta),\dot{q}(\eta))}\,d\eta }y(q,\dot{q}) \\
    &- \Gamma\Lambda\rbr*{ \int_0^t \tilde{q}(\eta)\tran{y(q(\eta),\dot{q}(\eta))}\,d\eta }y(q,\dot{q})
\end{aligned}~,
\end{equation}
then set $H(q) \equiv I$, $C(q,\dot{q}) \equiv 0$, $g(q) \equiv f_g$, $\tau_{q,\dot{q}} \equiv R(\phi)$, ${ y(q,\dot{q};\varphi) \equiv 1 }$, $\tilde{q}(0) = 0$, and ${\hat{A}(0) = 0}$ to get the result
\begin{equation}\hspace{-0.5em}
\begin{aligned}
    u = \tran{R(\phi)}\big(\, \ddot{q}^* - f_g - (K\Lambda + \Gamma)\tilde{q} &- \Gamma\Lambda\!\int_0^t\tilde{q}(\eta)\,d\eta \\
                                                                             &- (K + \Lambda)\dot{\tilde{q}} \,\big)
\end{aligned}~.\end{equation}
We use this observation later in \cref{sec:results-discussion} to compare controllers with the same gains and different model features. To this end, we always set initial conditions in simulation such that~$\tilde{q}(0) = \dot{\tilde{q}}(0) = 0$ (i.e., we start on the target trajectory) and $\hat{A}(0) = 0$.

\subsubsection{Meta-Ridge Regression (MRR)} Our next baseline comes from the meta-learning work reviewed in \cref{sec:literature-meta-learning} by  \citet{HarrisonSharmaEtAl2018,HarrisonSharmaEtAl2018b}, \citet{BertinettoHenriquesEtAl2019}, \citet{LeeMajiEtAl2019}, and \citet{OConnellShiEtAl2021}, wherein ridge regression is used as a base-learner to meta-learn parametric features~$y(x;\varphi)$. That is, for a given trajectory~$\mathcal{T}_j$, these works assume the last layer~$\hat{A}$ should be the best fit in a regression sense, as a function of the parametric features~$y(x;\varphi)$, to some subset of points in~$\mathcal{T}_j$. The feature parameters $\varphi$ are then trained to minimize this regression fit. This approach, which we term \emph{Meta-Ridge Regression (MRR)}, contrasts with our thesis that~$\varphi$ should be trained for the endmost purpose of improving \emph{control} performance, rather than \emph{regression} performance.

We now specify how to implement MRR using the meta-learning language from~\cref{sec:meta-learning}. Our implementation is a generalization\footnote{
    Unlike \citet{OConnellShiEtAl2021}, we do not assume access to direct measurements of the external force~$f_\mathrm{ext}$. Also, they use a more complex form of~\cref{eq:slotine-li-controller} with better \emph{parameter estimation} properties when the dynamics are linearly parameterizable with \emph{known} nonlinear features \citep{SlotineLi1989}. While we could use a parametric form of this controller in place of~\cref{eq:slotine-li-controller-parametric}, we forgo this in favour of a simpler presentation, since we focus on offline \emph{control-oriented} meta-learning of \emph{approximate} features.
} of the approach taken by \citet{OConnellShiEtAl2021} to any nonlinear control-affine dynamical system~\cref{eq:control-affine}, which can be slightly extended using~\cref{eq:fully-actuated-control-affine} to include all fully-actuated Lagrangian systems. Given a trajectory of data~$\mathcal{T}_j$ of the form~\cref{eq:trajectory-data}, let~${\mathcal{K}_j^\mathrm{ridge} \subset \{k\}_{k=0}^{\abs{\mathcal{T}_j}-1}}$ denote the indices of transition tuples in some subset of $\mathcal{T}_j$. Define $\Delta t_j^{(k)} \defn t_{j}^{(k+1)} - t_j^{(k)}$, and the Euler approximation
\begin{equation}\label{eq:euler}
\begin{aligned}
    \hat{x}_{j}^{(k+1)}(\hat{A}) 
    &\defn x_j^{(k)} \begin{aligned}[t]
        &+ \Delta{t}_j^{(k)}\rbr*{ f(x_j^{(k)}) + B(x_j^{(k)})u_j^{(k)} } \\
        &+ \Delta{t}_j^{(k)}B(x_j^{(k)})\hat{A}y(x_j^{(k)}; \varphi)
    \end{aligned}
\end{aligned}\end{equation}
as a function of $\hat{A}$. MRR posits that $\hat{A}$ should fit some subset of the trajectory in a regression-sense; we can express this with the adaptation mechanism 
\begin{equation}\label{eq:lstsq-adapt}
    \hat{A}_j = \argmin_{\hat{A} \in \R^{m \x p}} \sum_{k \in \mathcal{K}^\mathrm{ridge}_j}
    \norm{ \hat{x}_{j}^{(k+1)}(\hat{A}) - x_j^{(k+1)} }_2^2 + \mu_\mathrm{ridge}\norm{A}_F^2.
\end{equation}
The task loss associated with trajectory $\mathcal{T}_j$ is then the regression loss
\begin{equation}\label{eq:lstsq-task-loss}
    \ell_j(\hat{A}_j, \mathcal{T}_j)
    = \frac{1}{\abs{\mathcal{T}_j}}\sum_{k = 0}^{\abs{\mathcal{T}_j} - 1} \norm{
        \hat{x}_{j}^{(k+1)}(\hat{A}) - x_j^{(k+1)}
    }_2^2
\end{equation}
The adaptation mechanism~\cref{eq:lstsq-adapt} can be solved and differentiated in closed-form for any $\mu_\mathrm{ridge} > 0$ via the normal equations, since~$\hat{x}_{j}^{(k+1)}(\hat{A})$ is linear in~$\hat{A}$; indeed, only \emph{linear} integration schemes can be substituted into~\cref{eq:euler}. The meta-problem for MRR takes the form of~\cref{eq:meta-learning} with features~$y(x;\varphi)$, the task loss~\cref{eq:lstsq-task-loss}, and the adaptation mechanism~\cref{eq:lstsq-adapt}. The meta-parameters~$\theta_\mathrm{MRR} \defn \varphi$ are trained via gradient descent on this meta-problem, and then deployed online via the features~$y(q,\dot{q};\varphi)$ in the adaptive controller~\cref{eq:slotine-li-controller-parametric} or $y(x;\varphi)$ in~\cref{eq:adaptive-control-affine-parametric}. MRR does \emph{not} meta-learn the control gains and adaptation gain, so these must still be specified by the user.

Conceptually, MRR suffers from a \emph{fundamental mismatch} between its regression-oriented meta-problem and the online problem of adaptive trajectory tracking control. The ridge regression base-learner~\cref{eq:lstsq-adapt} suggests that~$\hat{A}$ should best fit the input-output trajectory data in a regression sense. However, as we mentioned in~\cref{sec:problem-statement}, adaptive controllers such as~\cref{eq:slotine-li-controller-parametric} and~\cref{eq:adaptive-control-affine-parametric} learn on a ``need-to-know'' basis for the primary purpose of control rather than regression. As we empirically demonstrate and discuss in \cref{sec:results-discussion}, since our control-oriented approach uses a meta-objective indicative of the downstream closed-loop tracking control objective, we achieve better tracking performance than MRR at test time.

\subsection{Testing with Distribution Shift}\label{sec:testing}
\begin{figure}
    \centering
    \includegraphics[width=\columnwidth]{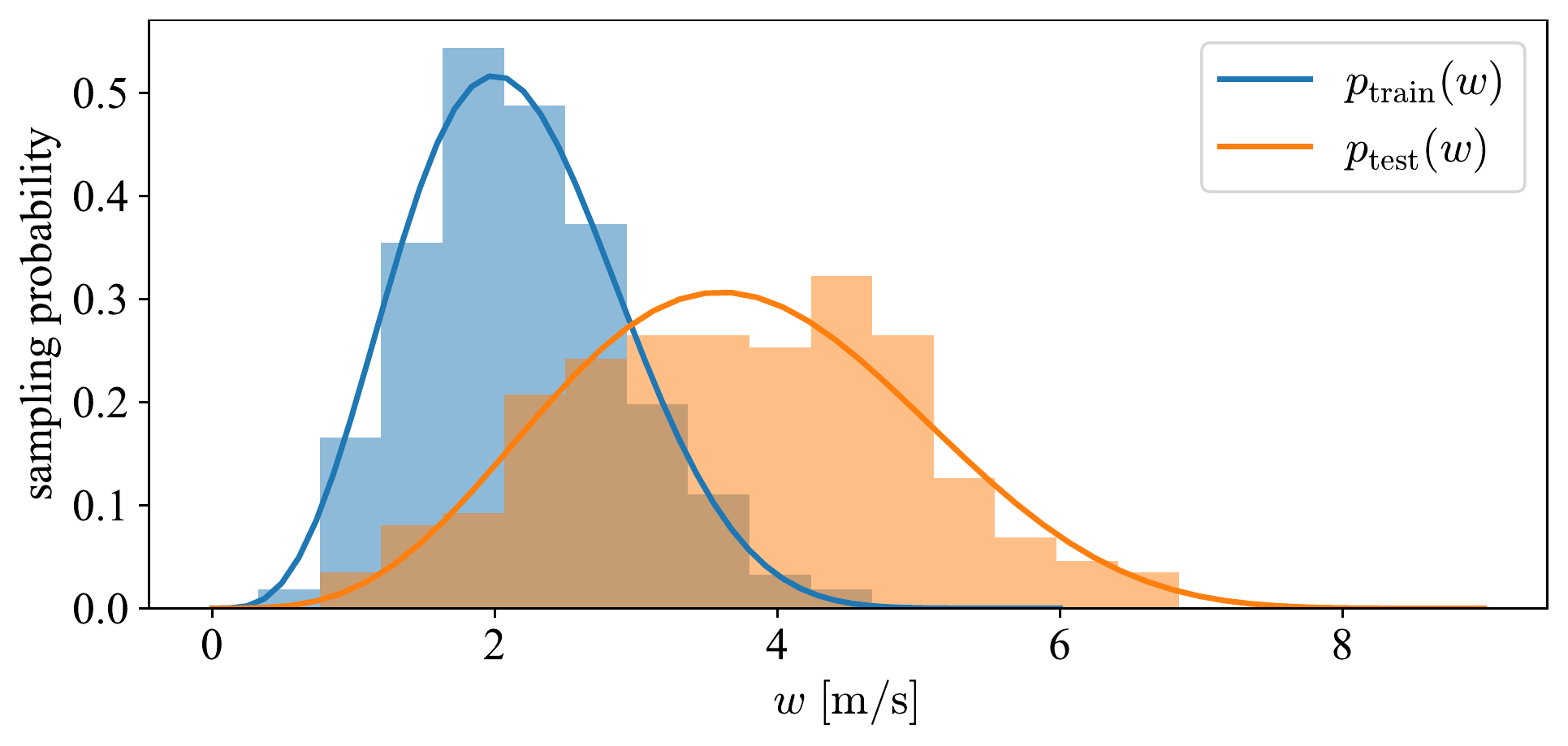}
    \caption{Training distribution~$p_\mathrm{train}$ and test distribution~$p_\mathrm{test}$ for the wind velocity~$w$ along the inertial $x$-axis. Both are scaled beta distributions; $p_\mathrm{train}$ is supported on the interval $[0, 6]~\mathrm{m/s}$ with shape parameters $(\alpha,\beta) = (5,9)$, while $p_\mathrm{test}$ is supported on the interval $[0, 9]~\mathrm{m/s}$ with shape parameters $(\alpha,\beta) = (5,7)$. The normalized histograms show the distribution of the actual wind velocity samples in the training and test data for a single seed and those in the test data, and highlight the out-of-distribution samples (\ie relative to $p_\mathrm{train}$) that occur during testing.}
    \label{fig:wind_dist}
\end{figure}
In \cref{sec:results-discussion}, we will present test results for closed-loop tracking control simulations. In particular, we want to assess the ability of each adaptive controller to \emph{generalize} to conditions different from those experienced during training data collection. To this end, during testing we always sample wind velocities from the test distribution in \cref{fig:wind_dist}, which is \emph{different} from that used for the training data~$\{\mathcal{T}_j\}_{j=1}^M$. In particular, the test distribution has a higher mode and larger support than the training distribution, thereby producing so-called out-of-distribution wind velocities at test time. In general, the robustness of meta-learned models and controllers to out-of-distribution tasks and train-test distribution shift is a core desideratum in meta-learning literature \citep{HospedalesAntoniouEtAl2021}.

\subsection{Results and Discussion}\label{sec:results-discussion}

We now present empirical test results for simulations of the PFAR and PVTOL systems subject to wind gust disturbances. For both systems, we compare the trajectory tracking performance of our meta-learned adaptive controller to the baseline methods outline in \cref{sec:baselines}. Our experiments are done in Python using NumPy \citep{HarrisMillmanEtAl2020} and JAX \citep{BradburyFrostigEtAl2018}. We use the explicit nature of Pseudo-Random Number Generation (PRNG) in JAX to set a seed prior to meta-training, and then methodically branch off the associated PRNG key as required throughout the train-test experiment pipeline. Thus, all of our results can be easily reproduced; code to do so is provided at~\url{https://github.com/StanfordASL/Adaptive-Control-Oriented-Meta-Learning}.

A benefit of our method is that it meta-learns model feature parameters $\varphi$, control gains (either $(\Lambda,K)$ for the PFAR system or $(c_x,c_y,k_x,k_y,k_\phi)$ for the PVTOL system), and the adaptation gain~$\Gamma$ offline. On the other hand, both the nominal feedback control and MRR baselines require gain tuning in practice by interacting with the real system. However, for the sake of comparison, we test every method with various combinations of feature parameters and control gains for each system on the same set of test trajectories.

\subsubsection{Testing on PFAR}
\begin{figure*}
    \centering
    \includegraphics[width=\textwidth]{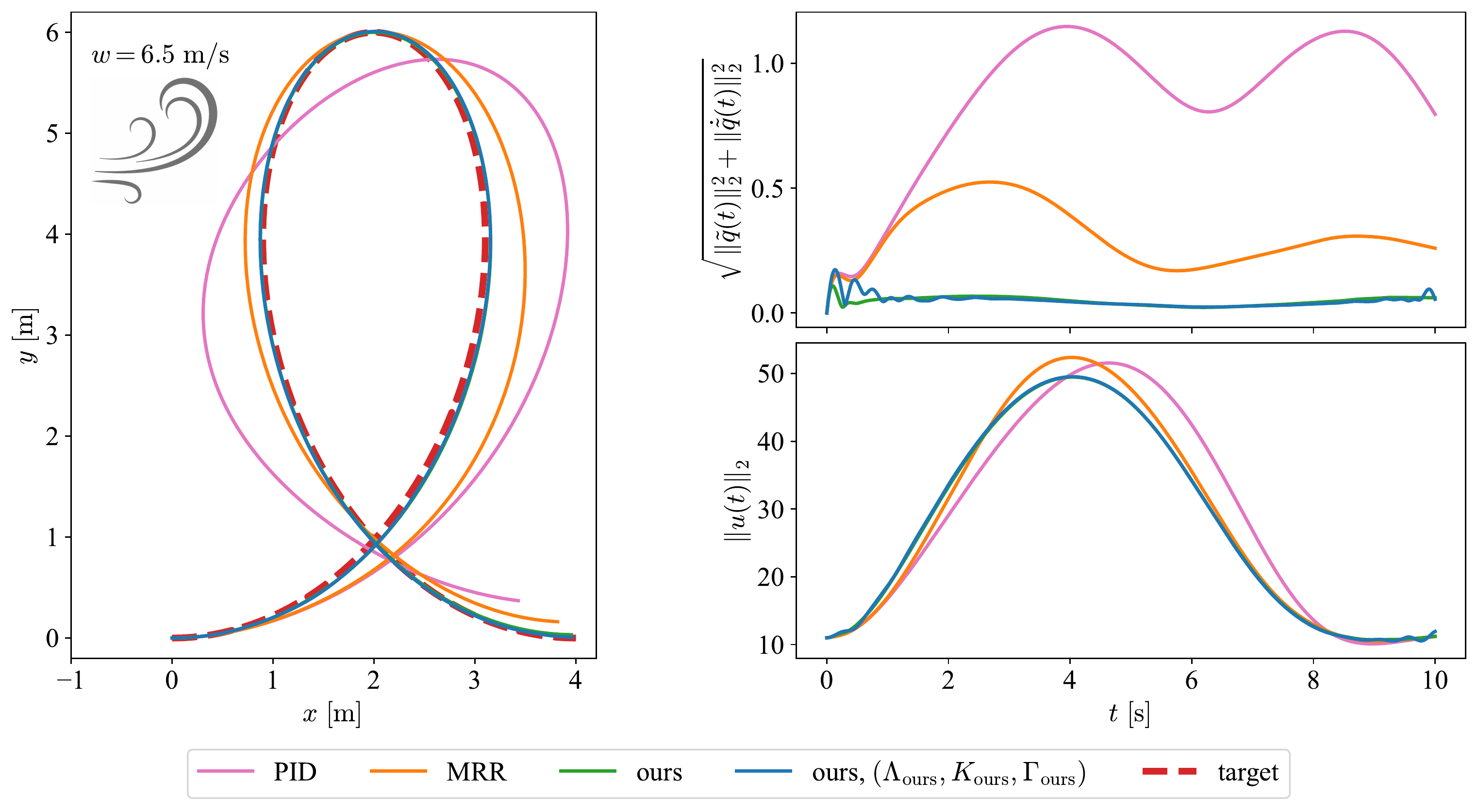}
    \caption{%
        Tracking results for the PFAR on a ``loop-the-loop'' with ${w = 6.5~\mathrm{m/s}}$, $M=10$, and $(\Lambda, K, \Gamma) = (I, 10I, 10I)$. We also apply our meta-learned features and gains $(\Lambda_\mathrm{ours}, K_\mathrm{ours}, \Gamma_\mathrm{ours})$. Every method expends similar control effort. However, with our meta-learned features, the tracking error ${\norm{x - x^*}_2 = \sqrt{\norm{\tilde{q}}_2^2 + \norm{\dot{\tilde{q}}}_2^2}}$ (where $x$ is overloaded to denote both position $x \in \R$ and state $x = (q,\dot{q})$) quickly decays after a short transient, while the effect of the wind pushing the vehicle to the right is more pronounced for the baselines.
    }\label{fig:results_traj}
\end{figure*}
\begin{figure*}
    \centering
    \includegraphics[width=\textwidth]{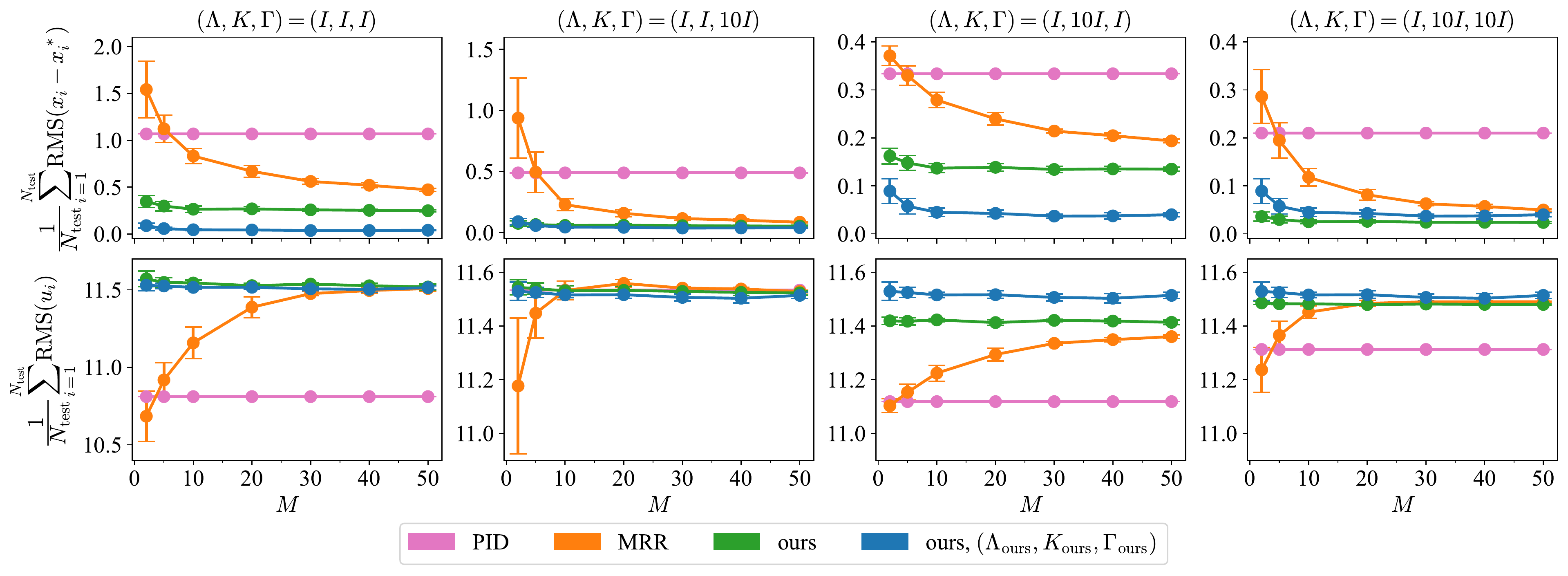}
    \caption{%
        Line plots of the average RMS tracking error $\frac{1}{N_\mathrm{test}}\sum_{i=1}^{N_\mathrm{test}}\mathrm{RMS}(x_i - x^*_i)$ and average control effort  $\frac{1}{N_\mathrm{test}}\sum_{i=1}^{N_\mathrm{test}}\mathrm{RMS}(u_i)$ for the PFAR system across $N_\mathrm{test}=200$ test trajectories versus the number of trajectories~$M \in \{2, 5, 10, 20, 30, 40, 50\}$ in the training data. For each method, we try out various control and adaptation gains $(\Lambda, K, \Gamma)$. With our method, we also use our meta-learned gains $(\Lambda_\mathrm{ours}, K_\mathrm{ours}, \Gamma_\mathrm{ours})$. Dots and error bars denote means and standard deviations, respectively, across 10~random seeds. The results for the PID controller do not vary since it does not require any training.
    }\label{fig:results}
\end{figure*}
For the PFAR system, we first provide a qualitative plot of tracking results for each method on a single ``loop-the-loop'' trajectory in \cref{fig:results_traj}, which clearly shows that our meta-learned features induce better tracking results than the baselines, while requiring a similar expenditure of control effort. For our method, the initial transient decays quickly, thereby demonstrating \emph{fast} adaptation and the potential to handle even time-varying disturbances.

For a more thorough analysis, we consider the Root-Mean-Squared (RMS) tracking error and control effort for each test trajectory~$\{x^*_i\}_{i=1}^{N_\mathrm{test}}$; for any vector-valued signal~$h(t)$ and sampling times~$\{t^{(k)}\}_{k=0}^N$, we define
\begin{equation}
    \mathrm{RMS}(h) \defn \sqrt{ \frac{1}{N} \sum_{k = 0}^N \norm{h(t^{(k)})}_2^2 }.
\end{equation}
We are interested in $\mathrm{RMS}(x_i - x^*_i)$ and $\mathrm{RMS}(u_i)$, where $x_i(t) = (q_i(t),\dot{q}_i(t))$ and $u_i(t)$ are the resultant state and control trajectories from tracking ${x^*_i(t) = (q^*_i(t),\dot{q}^*_i(t))}$. In \cref{fig:results}, we plot the averages ${ \frac{1}{N_\mathrm{test}}\sum_{i=1}^{N_\mathrm{test}}\mathrm{RMS}(x_i - x^*_i) }$ and $\frac{1}{N_\mathrm{test}}\sum_{i=1}^{N_\mathrm{test}}\mathrm{RMS}(u_i)$ across $N_\mathrm{test}=200$ test trajectories for each method. For our method and the MRR baseline, we vary the number of training trajectories~$M$, and thus the number of wind velocities from the training distribution in~\cref{fig:wind_dist} implicitly present in the training data. From~\cref{fig:results}, we observe the PID controller usually yields the highest tracking error, thereby indicating the utility of meta-learning features~$y(q,\dot{q};\varphi)$ to better compensate for~$f_\mathrm{ext}(q,\dot{q})$. We further observe in~\cref{fig:results} that, regardless of the control gains, using our features~$y(q,\dot{q};\varphi)$ in the adaptive controller~\cref{eq:slotine-li-controller-parametric} yields the lowest tracking error. Moreover, using our features with our meta-learned gains yields the lowest tracking error in all but one case. Our features sometimes induce a slightly higher control effort, especially when used with our meta-learned gains~$(\Lambda_\mathrm{ours}, K_\mathrm{ours}, \Gamma_\mathrm{ours})$. This is most likely since the controller can better match the disturbance~$f_\mathrm{ext}(q,\dot{q})$ with our features in closed-loop, and is an acceptable trade-off for improved tracking performance, which is our primary objective. In addition, when using our features with manually chosen or our meta-learned gains, the tracking error remains relatively constant over~$M$; conversely, the tracking error for the MRR baseline is higher for lower~$M$, and only reaches the performance of our method with certain gains for large~$M$. Overall, our results indicate:
\begin{itemize}
    \item The features~$y(q,\dot{q};\varphi)$ meta-learned by our control-oriented method are \emph{better conditioned for closed-loop tracking control} across a range of controller gains, particularly in the face of a distributional shift between training and test scenarios.
    \item The gains meta-learned by our method are competitive \emph{without manual tuning}, and thus can be deployed immediately or serve as a good initialization for further fine-tuning.
    \item Our control-oriented meta-learning method is \emph{sample-efficient} with respect to how much system variability is implicitly captured in the training data.
\end{itemize}
We again stress these comparisons could only be done by tuning the control gains for the baselines, which in practice would require interaction with the real system and hence further data collection. Thus, the fact that our control-oriented method can meta-learn good control gains \emph{offline} is a key advantage over regression-oriented meta-learning.

\subsubsection{Testing on PVTOL} Before testing and comparing our method to the baseline controller methods described in \cref{sec:baselines}, we must choose controller and adaptation gains for the baseline methods. Let us collectively notate the controller gains for the nominal differential-flatness-based controller~\cref{eq:pvtol-feedback} as
\begin{equation}
    \kappa \defn (c_x, c_y, k_x, k_y, k_\phi).
\end{equation}
For all tests on the PVTOL system, we compare the adaptive controller~\cref{eq:adaptive-control-affine-parametric} with our meta-learned gains to the following baseline configurations: 
\begin{enumerate}[label=(\Alph*)]
    \item the nominal controller with no adaptation and the initial controller gains $\kappa_\mathrm{init}$ used to collect training data;
    \item the nominal controller with no adaptation and the meta-learned controller gains $\kappa_\mathrm{ours}$ from our method;
    \item the adaptive controller~\cref{eq:adaptive-control-affine-parametric} with features learned via the MRR baseline, the initial controller gains $\kappa_\mathrm{init}$ used to collect training data, and the adaptation gain $\Gamma_\mathrm{init}$ from initialization of our meta-learned parameters; and
    \item the adaptive controller~\cref{eq:adaptive-control-affine-parametric} with features learned via the MRR baseline, and the meta-learned controller and adaptation gains $(\kappa_\mathrm{ours}, \Gamma_\mathrm{ours})$ from our method.
\end{enumerate}
Our goal with this set of configurations is an ablation study wherein the incremental benefits from using: 1) adaptation, 2) our meta-learned gains, and 3) our meta-learned features are demonstrated.

\begin{figure*}
    \centering
    \includegraphics[width=\textwidth]{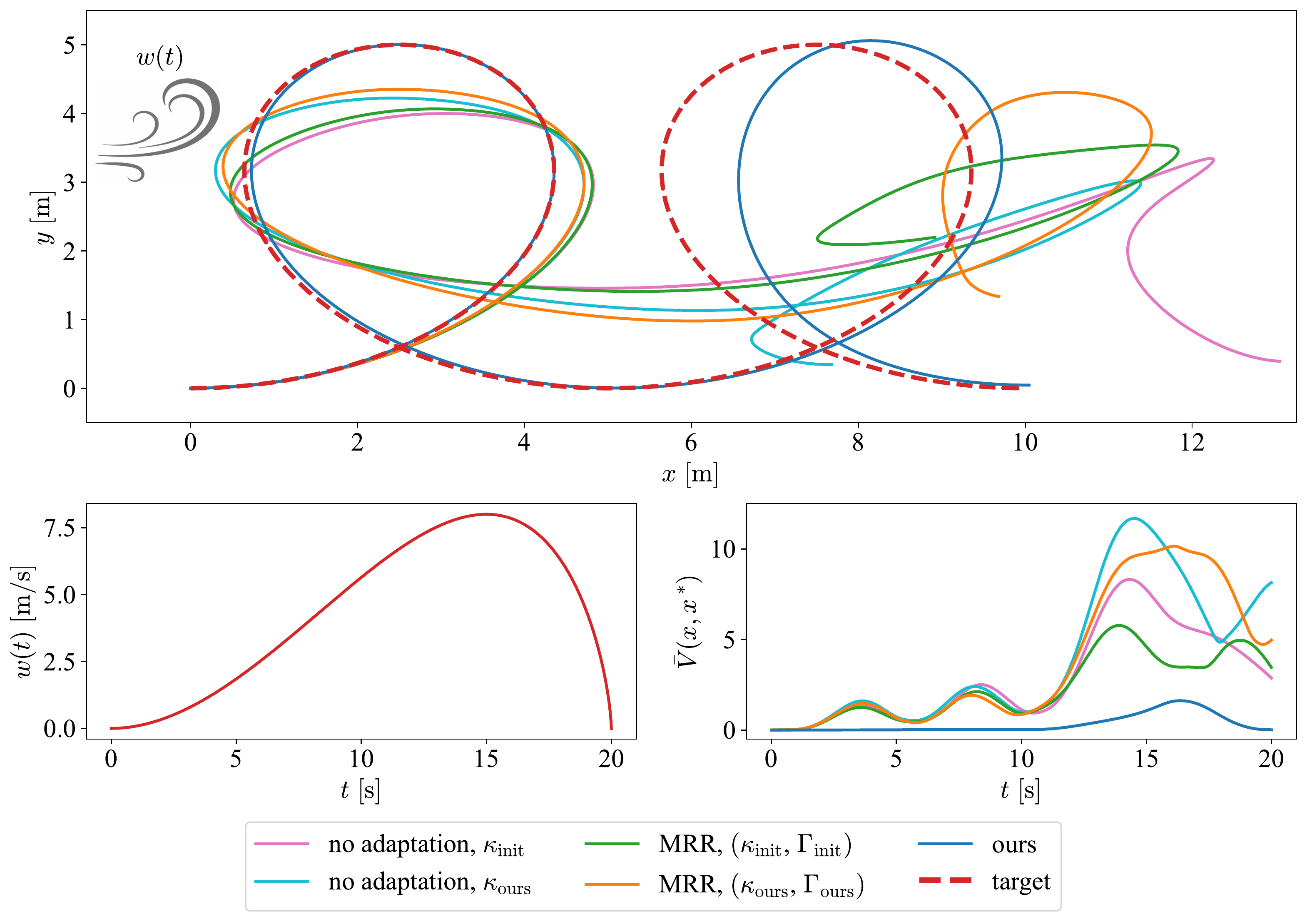}
    \caption{%
        Tracking results for the PVTOL system on a double ``loop-the-loop'' with $M = 10$ and a time-varying wind velocity $w(t)$. The certificate function~$\bar{V}$ from \cref{eq:pvtol-lyapunov} is also plotted to summarize the state tracking performance (where $x$ is overloaded to denote both position $x \in \R$ and vehicle state $x \in \R^6$). The value of~$\bar{V}$ for our meta-learned adaptive controller shows a ``bump'' at the time when the vehicle is buffeted away from the target trajectory by the wind.
    }\label{fig:pvtol_timevarying}
\end{figure*}
We begin by visualizing tracking results on a double ``loop-the-loop'' trajectory in \cref{fig:pvtol_timevarying}, with a \emph{time-varying} wind velocity~$w(t)$ that peaks at $8~\mathrm{m/s}$, which lies \emph{outside} of the training distribution in \cref{fig:wind_dist}. We also plot the certificate function~$\bar{V}$ from \cref{eq:pvtol-lyapunov} as a succinct scalar summary of the tracking performance for each method. From the first loop, we see immediately that all of the controllers except ours is significantly perturbed by even a small wind disturbance. As the wind velocity increases into the second loop, the vehicle using our meta-learned adaptive controller is marginally buffeted away from the target trajectory, but recovers as it exits the loop. Meanwhile, all of the other configurations suffer greatly from the high wind during the second loop.

\begin{figure}
    \centering
    \includegraphics[width=\columnwidth]{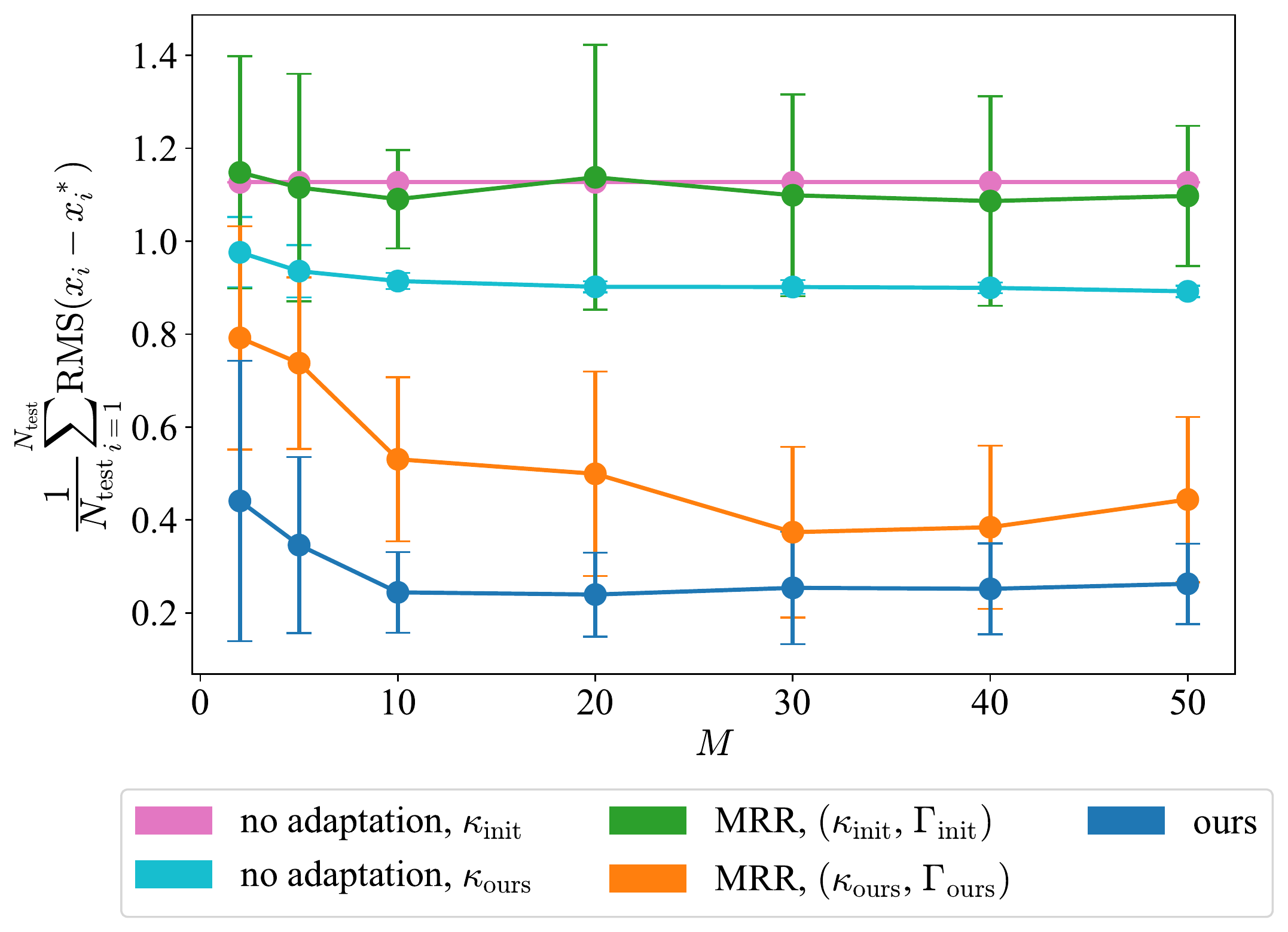}
    \caption{%
        Line plots of the average RMS tracking error $\frac{1}{N_\mathrm{test}}\sum_{i=1}^{N_\mathrm{test}}\mathrm{RMS}(x_i - x^*_i)$ for the PVTOL system over ${ N_\mathrm{test}=200 }$ test trajectories versus the number of trajectories ${ M \in \{2, 5, 10, 20, 30, 40, 50\} }$ in the training data. The results for the nominal feedback controller without any learned components do not vary since it does not require any training. Dots and error bars denote means and standard deviations, respectively, across 10~random seeds.
    }\label{fig:pvtol_lineplot}
\end{figure}
\begin{figure*}
    \centering
    \includegraphics[width=\textwidth]{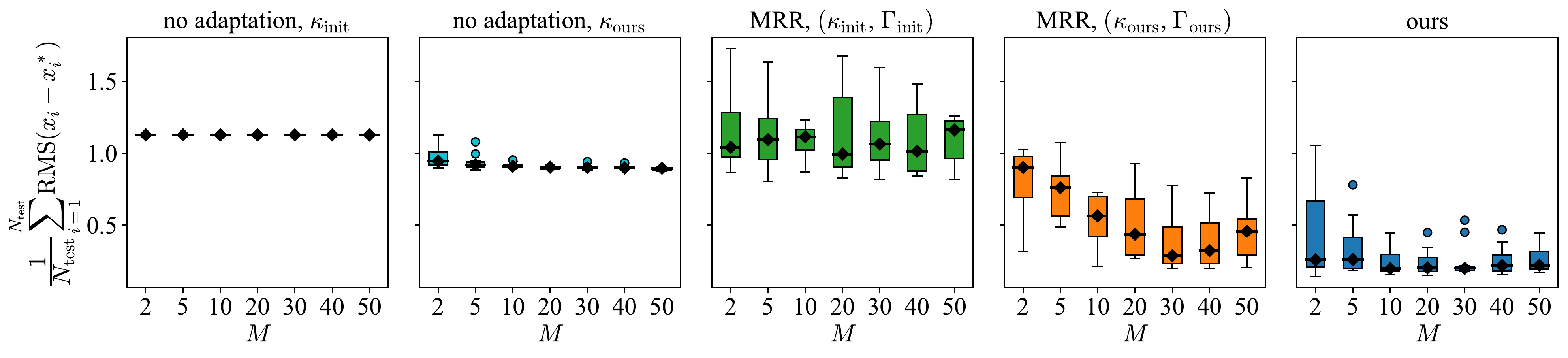}
    \caption{%
        Tukey box-and-whisker plots of the average RMS tracking error $\frac{1}{N_\mathrm{test}}\sum_{i=1}^{N_\mathrm{test}}\mathrm{RMS}(x_i - x^*_i)$ for the PVTOL system over $N_\mathrm{test}=200$ test trajectories versus the number of trajectories~$M \in \{2, 5, 10, 20, 30, 40, 50\}$ in the training data, for each controller configuration. The box plot statistics are computed across 10~random seeds. Medians are marked by black diamonds. The 25-th and 75-th percentiles are marked by the bottom and top edges, respectively, of each box. Each whisker extends an additional 1.5 times the interquartile range beyond the box edge. Outliers are marked by circles. The results for the nominal feedback controller without any learned components do not vary since it does not require any training.
    }\label{fig:pvtol_boxplot}
\end{figure*}
In a similar fashion to what we did for the PFAR system, we also analyze the average RMS tracking error ${ \frac{1}{N_\mathrm{test}}\sum_{i=1}^{N_\mathrm{test}}\mathrm{RMS}(x_i - x^*_i) }$ for each method across $200$~test trajectories, each alongside a fixed wind velocity~$w$ sampled from the test distribution in~\cref{fig:wind_dist}. For each configuration, we once again vary the number of training trajectories~$M$, and thus the number of wind velocities from the training distribution in~\cref{fig:wind_dist} implicitly present in the training data. \cref{fig:pvtol_lineplot} depicts the mean and standard deviation of ${ \frac{1}{N_\mathrm{test}}\sum_{i=1}^{N_\mathrm{test}}\mathrm{RMS}(x_i - x^*_i) }$ across 10~random seeds for each configuration. \cref{fig:pvtol_boxplot} displays more detail in the form of box plots for the average RMS tracking error across these seeds for each configuration. The spread in each plot is due to the effect the random seed has on sampling~$M$ trajectories, initializing parameters, and stochastic batch gradient descent during meta-training in our method and the MRR baseline. From \cref{fig:pvtol_lineplot} and \cref{fig:pvtol_boxplot}, beyond noting that our meta-learned adaptive controller outperforms all of the baselines, we make the following observations:
\begin{itemize}
    \item In comparing the non-adaptive controllers with $\kappa_\mathrm{init}$ and $\kappa_\mathrm{ours}$, we see that our meta-learned control gains are an improvement over $\kappa_\mathrm{init}$.
    
    \item In comparing the non-adaptive controllers to the adaptive controller using features meta-learned with MRR and $(\kappa_\mathrm{init},\Gamma_\mathrm{init})$, we see that MRR can learn features that lead to \emph{poor closed-loop performance} without gain tuning.
    
    \item In comparing the adaptive controller using features meta-learned with MRR and $(\kappa_\mathrm{ours},\Gamma_\mathrm{ours})$ to the other baseline configurations, we see that adaptation can lead to improved tracking performance with gain tuning.

    \item Finally, in comparing our meta-learned adaptive controller to the adaptive controller using features meta-learned with MRR and $(\kappa_\mathrm{ours},\Gamma_\mathrm{ours})$, we see once again that our meta-learned features are \emph{better conditioned for closed-loop tracking control}.
\end{itemize}
Overall, adaptation, tuned control and adaptation gains, and parametric model features individually have the potential to incrementally improve closed-loop performance. Critically, our meta-learning framework trains these components in an end-to-end fashion to realize the sum of these improvements.

\section{Conclusions \& Future Work}\label{sec:conclusion}

In this work, we formalized control-oriented meta-learning of adaptive controllers for nonlinear dynamical systems, offline from trajectory data. The procedure we presented is general and uses adaptive control as the base-learner to attune learning to the downstream control objective. We then specialized our procedure to fully-actuated Lagrangian and general control-affine dynamical systems, with adaptive controller designs parameterized by control gains, an adaptation gain, and nonlinear model features. We demonstrated that our control-oriented meta-learning method engenders better closed-loop tracking control performance at test time than when learning is done for the purpose of model regression.

There are a number of exciting future directions for this work. In particular, we are interested in control-oriented meta-learning with \emph{constraints}, such as for adaptive Model Predictive Control (MPC) \citep{AdetolaGuay2011,BujarbaruahZhangEtAl2018,SolopertoKohlerEtAl2019,KohlerKottingEtAl2020,SinhaHarrisonEtAl2022} with state and input constraints. Back-propagating through such a controller would leverage recent work on differentiable convex optimization \citep{AmosRodriguezEtAl2018,AgrawalBarrattEtAl2019,AgrawalBarrattEtAl2020}. We could also back-propagate through parameter constraints; for example, physical consistency of adapted inertial parameters can be enforced as Linear Matrix Inequality (LMI) constraints that reduce overfitting and improve parameter convergence \citep{WensingKimEtAl2017}. 

In addition, we want to extend our meta-learning framework in a principled manner to adaptive control for systems with unmatched uncertainties. Such uncertainties present a fundamental challenge for traditional adaptive controllers, since they cannot be cancelled stably by the control input \citep{LopezSlotine2021,SinhaHarrisonEtAl2022}. \citet{LopezSlotine2022} established a universal adaptation law using stability certificate functions, such as Lyapunov functions and Control Contraction Metrics (CCMs) \citep{ManchesterSlotine2017}, that are parameterized as a family of certificate functions corresponding to all possible values of the unmatched uncertainty, rather than just using a single certificate function corresponding to the nominal dynamics. To this end, we want to explore how meta-learning can be used to train such universal adaptive controllers defined in part by parametric stability certificates. This could build off of existing work on learning such certificates from data \citep{RichardsBerkenkampEtAl2018,SinghRichardsEtAl2020,BoffiTuEtAl2020,TsukamotoChungEtAl2021}.

\begin{acks}
We thank Masha Itkina for her invaluable feedback, and Matteo Zallio for his expertise in crafting~\cref{fig:pfar}. 
\end{acks}

\begin{funding}
This research was supported in part by the National Science Foundation (NSF) via Cyber-Physical Systems (CPS) award \#1931815 and Energy, Power, Control, and Networks (EPCN) award \#1809314, and the National Aeronautics and Space Administration (NASA) University Leadership Initiative via grant \#80NSSC20M0163. Spencer M.~Richards was also supported in part by the Natural Sciences and Engineering Research Council of Canada (NSERC). This article solely reflects the authors' own opinions and conclusions, and not those of any NSF, NASA, or NSERC entity.
\end{funding}

\begin{dci}
The authors declare that there is no conflict of interest.
\end{dci}

\bibliographystyle{plainnat}
\bibliography{ASL_papers,main}

\appendix
\crefalias{section}{appendix}

\section{Training Details}\label{app:training}

\subsection{Our Method}
Before meta-training, for both the PFAR and PVTOL systems we first train an ensemble of~$M$ models~$\{\hat{f}_j\}_{j=1}^M$, one for each trajectory $\mathcal{T}_j$, via gradient descent on the regression objective~\cref{eq:ensemble-training} with $\mu_\mathrm{ensem} = 10^{-4}$ and a single fourth-order Runge-Kutta step to approximate the integral over $t^{(k+1)}_{j} - t^{(k)}_j = 0.01~\mathrm{s}$. We set each~$\hat{f}_j$ as a feed-forward neural network with~$2$ hidden layers, each containing~$32$ $\tanh(\cdot)$ neurons. We perform a random $75\%/25\%$ split of the transition tuples in $\mathcal{T}_j$ into a training set~$\mathcal{T}_j^\mathrm{train}$ and validation set~$\mathcal{T}_j^\mathrm{valid}$, respectively. We do batch gradient descent via Adam \citep{KingmaBa2015} on~$\mathcal{T}_j^\mathrm{train}$ with a step size of~$10^{-2}$, over $1000$~epochs with a batch size of $\floor{0.25\abs{\mathcal{D}_j^\mathrm{train}}}$, while recording the regression loss with~$\mu_\mathrm{ensem} = 0$ on~$\mathcal{T}_j^\mathrm{valid}$. We proceed with the parameters for~$\hat{f}_j$ corresponding to the lowest recorded validation loss.

With the trained ensemble~$\{\hat{f}_j\}_{j=1}^M$, we can now meta-train $\theta_\mathrm{ours} \defn (\varphi, \Lambda, K, \Gamma)$ for the PFAR system or $\theta_\mathrm{ours} = (\varphi, c_x, c_y, k_x, k_y, k_\phi, \Gamma)$ for the PVTOL system. First, we randomly generate ${N = 10}$ smooth target trajectories~$\{x^*_i\}_{i=1}^N$ in the same manner described in \cref{sec:data-collection-training}. We then randomly sub-sample ${ N_\mathrm{train} = \floor{0.75 N} = 7 }$ target trajectories and $M_\mathrm{train} = \floor{0.75 M}$ models from the ensemble to form the meta-training set $\{(x^*_i,\hat{f}_j)\}_{i=1,j=1}^{N_\mathrm{train}, M_\mathrm{train}}$, while the remaining models and target trajectories form the meta-validation set. We set $\hat{A}y(x;\varphi)$ as a feed-forward neural network with~$2$ hidden layers of~$32$ $\tanh(\cdot)$ neurons each, where the adapted parameters~$\hat{A}(t) \in \R^{d \x 32}$ serve as the output layer. We set up the meta-problem~\cref{eq:meta-adaptive-control-ensemble} using $\{(x^*_i,\hat{f}_j)\}_{i=1,j=1}^{N_\mathrm{train}, M_\mathrm{train}}$, $\mu_\mathrm{ctrl} = 10^{-3}$, $\mu_\mathrm{meta} = 10^{-4}$, and either the adaptive controller~\cref{eq:slotine-li-controller-parametric} for the PFAR system or the adaptive controller~\cref{eq:adaptive-control-affine-parametric} for the PVTOL system. We then perform gradient descent via Adam with a step size of~$10^{-2}$ to train~$\theta_\mathrm{ours}$. We compute the integral in~\cref{eq:meta-adaptive-control-ensemble} via a fourth-order Runge-Kutta integration scheme with a fixed time step of $0.01~\mathrm{s}$. We back-propagate gradients through this computation in a manner similar to \citet{ZhuangDvornekEtAl2020}, rather than using the adjoint method for neural ODEs \citep{ChenRubanovaEtAl2018}, due to our observation that the backward pass is sensitive to any numerical error accumulated along the forward pass during closed-loop control simulations. We perform~$500$ gradient steps while recording the meta-loss from~\cref{eq:meta-adaptive-control-ensemble} with~$\mu_\mathrm{meta} = 0$ on the meta-validation set, and take the best meta-parameters~$\theta_\mathrm{ours}$ as those corresponding to the lowest recorded validation loss.

\subsection{MRR Baseline} 
To meta-train $\theta_\mathrm{MRR} \defn \varphi$, we first perform a random $75\%/25\%$ split of the transition tuples in each trajectory $\mathcal{T}_j$ to form a meta-training set $\{\mathcal{T}_j^\mathrm{meta\text{-}train}\}_{j=1}^M$ and a meta-validation set $\{\mathcal{T}_j^\mathrm{meta\text{-}valid}\}_{j=1}^M$. We set $\hat{A}y(x;\varphi)$ as a feed-forward neural network with~$2$ hidden layers of~$32$ $\tanh(\cdot)$ neurons each, where the adapted parameters~$\hat{A}(t) \in \R^{d \x 32}$ serve as the output layer. We construct the meta-problem~\cref{eq:meta-learning} using the task loss~\cref{eq:lstsq-task-loss}, the adaptation mechanism~\cref{eq:lstsq-adapt}, and $\mu_\mathrm{meta} = 10^{-4}$; for this, we use transition tuples from~$\mathcal{T}_j^\mathrm{meta\text{-}train}$. We then meta-train~$\theta_\mathrm{MRR}$ via gradient descent using Adam with a step-size of~$10^{-2}$ for $5000$ steps; at each step, we randomly sample a subset of $\floor{0.25\abs{\mathcal{T}_j^\mathrm{meta\text{-}train}}}$ tuples from $\mathcal{T}_j^\mathrm{meta\text{-}train}$ to use in the closed-form ridge regression solution. We also record the meta-loss with $\mu_\mathrm{meta} = 0$ on the meta-validation set at each step, and take the best meta-parameters~$\theta_\mathrm{MRR}$ as those corresponding to the lowest recorded validation loss.

\end{document}